\PassOptionsToPackage{x11names,table}{xcolor}
\documentclass[]{fairmeta}
\usepackage{xcolor}
\usepackage{colortbl}
\usepackage{caption}
\usepackage[linesnumbered, ruled]{algorithm2e}
\usepackage[x11names,table]{xcolor}
\usepackage[normalem]{ulem}
\usepackage{mathtools}
\usepackage{mathrsfs}
\newtheorem{theorem}{Theorem}[section]

\newtheorem{lemma}[theorem]{Lemma}

\theoremstyle{definition}
\newtheorem{definition}[theorem]{Definition}
\theoremstyle{remark}
\newtheorem{remark}{Remark}[section]
\theoremstyle{remark}

\theoremstyle{remark}

\theoremstyle{remark}

\theoremstyle{remark}

\theoremstyle{remark}

\newtheorem{assumption}{Assumption}[section]
\theoremstyle{remark}

\captionsetup[table]{justification=centering, singlelinecheck=false}
\newcommand{\smallcitep}[1]{{\scriptsize \citeauthor{#1}}}

\title{\textsc{LlamaRL}: A Distributed Asynchronous Reinforcement Learning Framework for Efficient Large-scale LLM Training}

\author[1]{Meta GenAI}
\affiliation[1]{A detailed contributor list can be found in the appendix of this paper.}

\abstract{
Reinforcement Learning (RL) has become the most effective post-training approach for improving the capabilities of Large Language Models (LLMs). In practice, due to the high demands on latency and memory, it is particularly challenging to develop an efficient RL framework that reliably manages policy models with hundreds to thousands of billions of parameters. 

In this paper, we present \textsc{LlamaRL}, a fully-distributed, asynchronous RL framework optimized for efficient training of large-scale LLMs with various model sizes (8B, 70B and 405B parameters) on GPU clusters with a handful to thousands devices. \textsc{LlamaRL} introduces a streamlined, single-controller architecture built entirely on native PyTorch, enabling modularity, ease of use, and seamless scalability to thousands of GPUs. We also provide a theoretical analysis of \textsc{LlamaRL}’s efficiency, including a formal proof that its asynchronous design leads to strict RL speed-up. Empirically during the Llama 3 post training, by leveraging best practices such as co-located model offloading, asynchronous off-policy training, and distributed direct memory access for weight synchronization, \textsc{LlamaRL} achieves significant efficiency gains—up to 10.7x speedup compared to the DeepSpeed-Chat like systems on a 405B-parameter policy model. Furthermore, the efficiency advantage continues to grow with increasing model scale, demonstrating the framework's suitability for future large-scale RL training.}

\date{\today}

\begin{document}

\maketitle

\section{Introduction}
\label{section:intro}

Reinforcement Learning (RL) has become the most effective post-training approach for improving the capabilities of Large Language Models (LLMs) \citep{Ouyang2022RLHF,jaech2024openai,deepseek2025r1}. In practice, due to the high demands on latency and memory, it is particularly challenging to develop an efficient RL framework that reliably manages policy models with hundreds to thousands of billions of parameters. 

In this paper, we present \textsc{LlamaRL}, a fully-distributed, asynchronous reinforcement learning (RL) framework optimized for efficient training of large-scale language models (LLMs) with different sizes (8B, 70B and 405B parameters,  \citep{grattafiori2024llama}) on GPU clusters with a handful to thousands H100 GPUs. \textsc{LlamaRL} introduces a streamlined, single-controller architecture built entirely on native PyTorch, enabling modularity, ease of use, and seamless scalability to thousands of GPUs. We also provide a theoretical analysis of \textsc{LlamaRL}’s efficiency, including a formal proof that its asynchronous design leads to strict RL speed-up. Empirically, by leveraging best practices such as co-located model offloading, asynchronous off-policy training, and GPU-native distributed weight synchronization via NVLink, \textsc{LlamaRL} achieves significant efficiency gains—up to 10.7x speedup compared to state-of-the-art systems (e.g., DeepSpeed-Chat \citep{yao2023deepspeedchateasyfastaffordable}) on a 405B-parameter policy model. Furthermore, the efficiency advantage continues to grow with increasing model scale, demonstrating the framework's suitability for future large-scale RL training.

\subsection{Challenges of Scaling RL for LLM}
Started as a relatively lightweight calibration against human preferences \citep{Christiano2017,Ouyang2022RLHF}, RL has recently proved effective at enabling powerful capabilities such as reasoning \citep{jaech2024openai,deepseek2025r1}, code generation \citep{gehring2024rlef,wei2025swe}, among others. Therefore, an efficient, succinct and highly-scalable RL training framework would accelerate research progress and become beneficial to a bigger community. However, implementing such a training framework poses significant engineering challenges. These challenges can be summarized into three main categories:

\paragraph{\textbf{Flexibility to Support Various RL Algorithms.}} RL training process is inherently complex, not only due to the allocation of models and associated computations within the algorithm but also because of the diverse data communication among them. To date, a number of RL algorithms have been developed to finetune LLMs. PPO requires running a pipeline involving four models, i.e., actor, critic, reference policy and reward models, interacting in a sophisticated manner during training. Representative algorithms such as RAFT~\citep{dong2023raft}, ReMax \citep{li2024remaxsimpleeffectiveefficient}, RLOO \citep{ahmadian2024back} and GRPO \citep{shao2024deepseekmath} have been developed without using the critic model. Multi-objective optimization frameworks such as CGPO \citep{xu2024perfect} has been developed to leverage multiple reward models and judges at the same time. As the complexity of these algorithms grows, it is important to build a flexible training framework which can easily extend to different number of models and schedule the corresponding data and parameter flows.

\paragraph{\textbf{Scaling to Larger Models and More GPUs.}} Training larger models consistently yields enhanced performance across a wide range of tasks  ~\citep{brown2020language,Kaplan2020Scaling,hoffmann2022training}. Accommodating these ever-larger models typically requires intricate parallelism to handle both memory and computational constraints, it becomes more challenging in RL as various actors (e.g. policy model, reward model etc.) with varying sizes run simultaneously on large GPU clusters. Taking the Llama 3.1 models \citep{grattafiori2024llama} as an example, training the 405B model via PPO requires: (1) A minimum of 512  NVIDIA's H100 GPUs in total; (2) A model tensor parallelism (TP) size of 32 is used to reduce the memory footprint of model weights on each GPU; (3) A Fully Sharded Data Parallel (FSDP) optimization with a parallelism shard of 16 is used to further reduce the memory footprint \citep{rajbhandari2019zero,zhao2023pytorchfsdpexperiencesscaling}; (4) A smaller LLM model (e.g. 8B parameters) acts as a reward model. Therefore, a highly scalable RL framework is required. It must efficiently support diverse and large-scale model sizes, as well as flexible parallelism strategies, across different actors and GPU configurations to ensure efficient large-scale training.

\paragraph{\textbf{Insufficient GPU Usage due to Idle Bubbles.}} 
For LLMs, RL algorithms need to run response generation followed by loss computation and weight updates. Because different responses on different workers can have different lengths, taking different amount of time to process, such sequential execution often result in bubbles where some GPU workers are idle. The situation of insufficient GPU usage becomes even worse when the larger models introduce larger generation time differences causing larger bubbles. In some cases, slow data flow communication is another issue to waste the GPU resource. Implementing a RL training framework with high GPU utilization can not only shorten the fine-tuning wall-clock time, but also pave a way for the the bigger community to train a model with less GPU resources. Additionally, maintaining high GPU utilization while scaling both model size and GPU resources remains a significant challenge.

\subsection{Contributions}
To address these challenges, we introduce \textsc{LlamaRL}, a fully-distributed, asynchronous RL training framework optimized for efficient, large-scale LLM training. Our key contributions are summarized below, following a brief background and related work discussion:
\begin{itemize}
    \item \textbf{Simple and Modular Architecture:} \textsc{LlamaRL} employs a streamlined single-controller architecture built entirely on native PyTorch. This design can seamlessly scale across thousands of GPUs, facilitating efficient training of large-scale LLMs (e.g., a 405B-parameter policy model). Its modular structure and intuitive control logic enable users to easily adapt and extend the framework, supporting diverse RL algorithms. We detail such designs in Section~\ref{sec:design}.
    \item \textbf{High Efficiency with Scalable Best Practices:} \textsc{LlamaRL} introduces a set of best practices specifically tailored for RL training of large-scale and continuously scaling LLMs. Our experiments demonstrate a 10.7x efficiency improvement compared to state-of-the-art systems such as DeepSpeed-Chat on a 405B-parameter model, with this efficiency advantage growing further as model scale increases. The core best practices include the following, as we will detail in Section~\ref{sec:strategies}:
    \begin{itemize}
        \item \textbf{Co-located Model Offloading with Fine-grained Parallelism and Quantization:} We completely offload the generation process from the training cluster, recognizing that it is memory-bound and significantly contributes to execution time. Complete offloading enhances scalability and flexibility, enabling fine-grained parallelism and quantization optimizations, thus significantly reducing compute and memory requirements.
        \item \textbf{Asynchronous Off-policy RL:} To minimize computational bubbles, our training and generation processes run asynchronously in parallel. This approach significantly improves throughput and resource utilization, hence speeding up the overall training. Inevitably, this introduces off-policyness with 1 to \textit{n} steps of delay, as illustrated in Figure~\ref{fig:example_rlhf_workflow}. We introduce \texttt{AIPO} - Asynchronous Importance weighted Policy Optimization - an example of a principled off-policy learning algorithm which effectively mitigates training instability in large-scale training.
        \item \textbf{Fully Distributed Weight Synchronization via Direct GPU Memory Access:} We developed a fully distributed, GPU-native synchronization method to efficiently synchronize model weights between the offloaded generators and policy models. Leveraging direct GPU-to-GPU zero-copy transfers via NVLink, this method achieves linear scalability, enabling weight updates for terabyte-scale models across thousands of GPUs in approximately 2 seconds, as illustrated in Figure~\ref{fig:weights_update}.
    \end{itemize}
\end{itemize}
Further, to better convey important intuitions to practitioners, as to the source of the empirical speed-up, we build theoretical guarantees on why asynchronous RL training strictly speeds up over the synchronous counterpart (Section~\ref{sec:theory}). Finally, we finish with extensive experimental ablations along a few dimensions: the training time speed-up, the model quality as well the importance of off-policy corrections in asynchronous training (Section~\ref{sec:exp}). Taken together, we showcase the promise of \textsc{LlamaRL} to massively scale up RL training.

\section{Background}\label{sec:background}

We provide background on a few important topics related to this work.

\paragraph{\textbf{Large language models (LLM).}} LLMs are a powerful class of generative models. Upon large-scale pre-training and relatively lightweight fine-tuning, these models are incredibly versatile, capable of performing tasks like summarization, coding and translation, among others \citep{ziegler2019fine,Stiennon2020}. As a result, LLMs can be very helpful human assistants \citep{Ouyang2022RLHF,OpenAI2023GPT4,team2023gemini,Anthropic2023Claude,grattafiori2024llama}, and sometimes with specialized capabilities that exceed human users or even experts \citep{jaech2024openai,deepseek2025r1}. 

\paragraph{\textbf{Scaling laws.}} Pre-training research has shown that the performance of LLMs can be reasonably predicted by a power-law of model size, data size and compute \citep{Hestness2017Scaling, Kaplan2020Scaling,hoffmann2022training}, facilitating industry grade scaling of large models. Recently, it has been shown that scaling inference-time compute, elicited through RL on verifiable tasks, is an effective approach for empowering LLMs with profound reasoning capabilities \citep{zhang2024scaling,jaech2024openai,deepseek2025r1}, improving their mathematical problem solving, code generation and safety alignment \citep{jaech2024openai,deepseek2025r1,kimi2025,guan2024deliberative}. 
These recent advances underscore the potential of scaling up LLM compute, emphasizing the need to develop more efficient reinforcement learning systems.

\paragraph{\textbf{Reinforcement learning for LLMs.}}
Reinforcement learning has emerged as a key component of fine-tuning state-of-the-art LLMs, such as GPT-4 \citep{OpenAI2023GPT4}, Claude \citep{Anthropic2023Claude}, Gemini \citep{Google2023Bard,team2023gemini}, Llama 2-Chat \citep{Touvron2023Llama2}, and DeepSeek-R1 \citep{deepseek2025r1}, among others. RL allows LLMs to go beyond modeling the distribution of their supervised training data and adapt the sampling distribution to be more highly rated by reward functions. For Reinforcement Learning from Human Feedback (RLHF), where the goal is to make the model more creative, safe and helpful, such rewards are often provided by human raters \citep{Christiano2017,Ouyang2022RLHF,Bai2022Training,Bai2022ConstitutionalAI}. More generally, the rewards can also come from rule-based scorers, such as unit test execution for code applications \citep{li2022competition,gehring2024rlef,el2025competitive,wei2025swe} or matching against ground truth for mathematical reasoning \citep{uesato2022solving,lightman2023let}.

\paragraph{\textbf{Training algorithms and systems for RL.}} RL algorithms commonly used for training LLMs, such as Proximal Policy Optimization (PPO) \citep{Schulman2017PPO}, usually consist of four models: a policy model, a critic model (or value model), a reference policy model, and a reward model. PPO proceeds in iterations, each with three stages: (1) response generation using the policy model with a batch of prompts; (2) preparation of training data by scoring the generated responses through a single forward pass of the critic, reference policy, and reward models; (3) updating the actor and critic models through forward and backward computation. A number of RL pipelines \citep{ Christiano2017,ziegler2019fine,Stiennon2020,Ouyang2022RLHF,li2024remaxsimpleeffectiveefficient,ahmadian2024back,shao2024deepseekmath} follow similar stages but involve different numbers of models and data dependencies among the models. For example, the critic models can be optional and depending on the application, the reward models can be replaced by rule-based scorers.

Figure 1 depicts an example flow of online RL training only employing a policy model and several rule-based scorers for simplicity. We will use this RL training flow to explain our framework in the following sections. 

\begin{figure}[h!]
    \centering
    \includegraphics[width=0.75\linewidth]{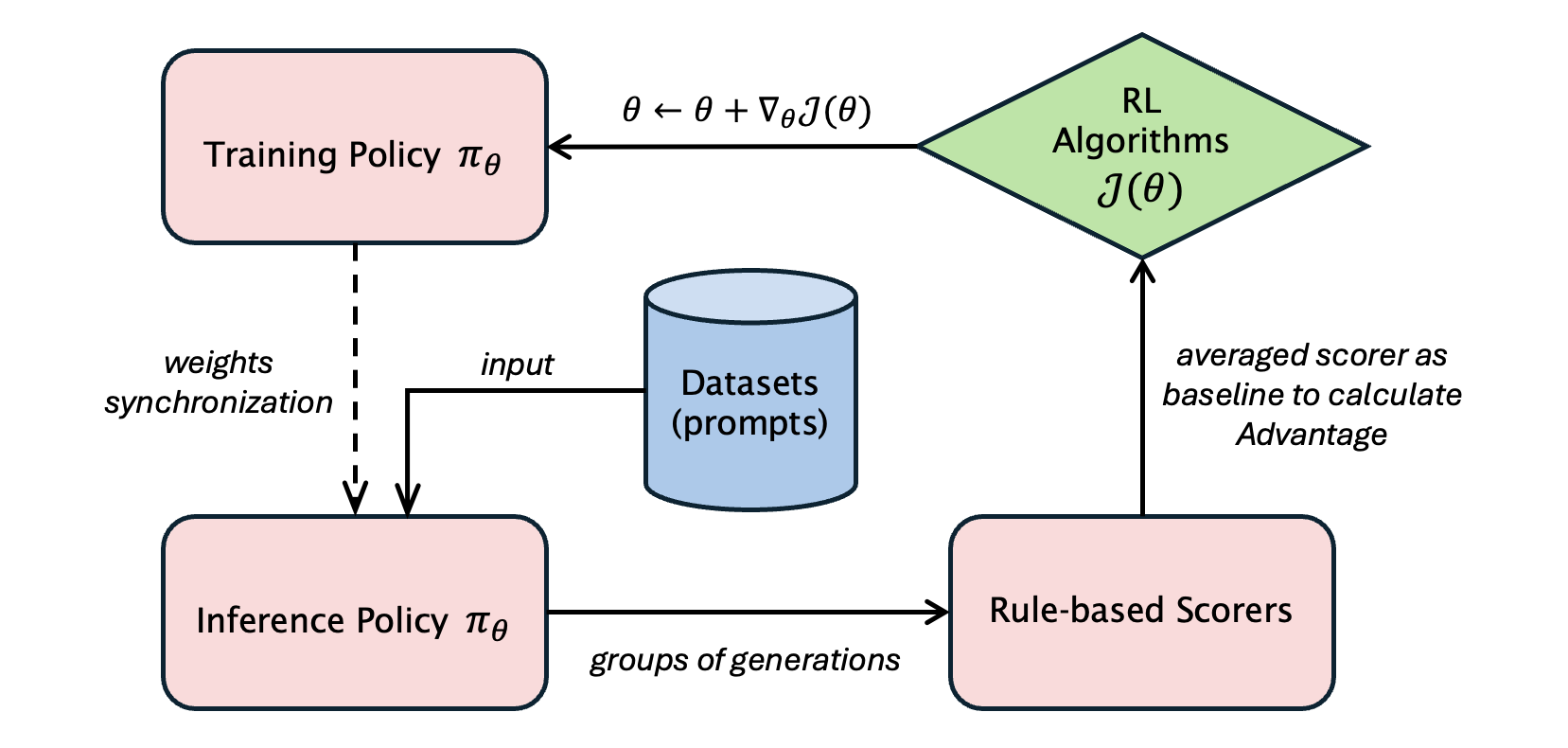}
    \caption{An example flow of online RL training. A reference model for regularization is dropped in this flow for simplicity. The example flow also foregoes a learned critic model, and instead estimates the baseline from group scores to calculate the advantage functions for policy update. The example makes use of rule-based scorers, as is often the case for code and reasoning applications. The policy model has two instances, implementing based on the Fully Sharded Data Parallel (FSDP) and CUDA Graph for training and inference optimizations, respectively.}
    \label{fig:example_rlhf_workflow}
\end{figure}

\section{Related Work} \label{sec:related}
Various frameworks have been proposed and implemented to support the RL training while enhancing the efficiency and scalability. We start with an overview of the existing framework design principles, see Table~\ref{tab:rlhf-frameworks} for a summary and comparison to \textsc{LlamaRL}.

\begin{table}[ht]
\centering
\begin{tabular}{c|c|c|c|c|c}
\toprule
\bfseries{Framework} & \begin{tabular}[c]{@{}c@{}}\bfseries{Model} \\ \bfseries{Placement} \end{tabular} & \begin{tabular}[c]{@{}c@{}}\bfseries{Weights} \\ \bfseries{Synchronization} \end{tabular} & \begin{tabular}[c]{@{}c@{}}\bfseries{Parallelism}   \\ \bfseries{(train/gen)} \end{tabular} & \bfseries{Async} & \begin{tabular}[c]{@{}c@{}}\bfseries{Model} \\ \bfseries{Scale} \end{tabular} \\ \midrule

\begin{tabular}[c]{@{}c@{}}DeepSpeed-Chat\\\smallcitep{yao2023deepspeedchateasyfastaffordable} \end{tabular} & Co-located & Communication & ZeRO/TP & No & 65B \\ \midrule

\begin{tabular}[c]{@{}c@{}}FlexRLHF\\\smallcitep{flexrlhf} \end{tabular} & Distributed & \begin{tabular}[c]{@{}c@{}}In-place update \\ (HybridEngine) \end{tabular} & ZeRO/TP & No & 66B \\ \midrule

\begin{tabular}[c]{@{}c@{}}NeMo-Aligner\\\smallcitep{shen2024nemoalignerscalabletoolkitefficient} \end{tabular} & \begin{tabular}[c]{@{}c@{}}Actor/Ref co-located \\ Critic/RM co-located \end{tabular} & \begin{tabular}[c]{@{}c@{}}In-place update \\ (TensorRT Refitter) \end{tabular} & 3D/3D & No & 405B \\ \midrule

\begin{tabular}[c]{@{}c@{}}OpenRLHF\\\smallcitep{hu2024openrlhfeasytousescalablehighperformance} \end{tabular} & Distributed & Communication & ZeRO/3D & No & 70B \\ \midrule

\begin{tabular}[c]{@{}c@{}}HybridFlow\\\smallcitep{sheng2024hybridflow} \end{tabular} & Distributed & \begin{tabular}[c]{@{}c@{}}In-place update \\ (3D-HybridEngine) \end{tabular} & \begin{tabular}[c]{@{}c@{}}(ZeRO, FSDP)\\/3D\end{tabular} & No & 70B \\ \midrule

\begin{tabular}[c]{@{}c@{}}AsyncRLHF\\\smallcitep{noukhovitch2024asynchronousrlhffasterefficient} \end{tabular} & \begin{tabular}[c]{@{}c@{}}Distributed \\ (single node) \end{tabular} & communication & \begin{tabular}[c]{@{}c@{}}(ZeRO, FSDP)\\/3D\end{tabular} & Yes & 8B \\ \midrule

\textsc{LlamaRL} & Distributed & communication & FSDP/3D & Yes & 405B \\ \midrule

\end{tabular}
\vspace{0.3em}
\caption{Comparison of RL training frameworks. Since all the frameworks in the table leverage different optimizations for training and inference, the column of weights synchronization refers to the method passing the weights from the training policy model to the inference policy model. The different parallelisms supported in the training and inference stages are listed in the table as well.}
\label{tab:rlhf-frameworks}
\end{table}

\paragraph{\textbf{Different Optimizations for Inference and Training.}}
Usually, a RL algorithm includes an inference phase to roll out the generations from the policy model, and a training phase for the policy model to learn the experience from the generations. Due to the different characters and constrains of the two phases, distinct optimization techniques should be applied to the inference and training phases, respectively. Therefore, most of the training frameworks set up an inference engine and a training engine of the policy model to execute in the corresponding phases. 

\paragraph{\textbf{Co-located Model Placement.}}
Including the inference policy and training policy models, the models in the RL algorithms can be allocated by mainly two strategies. Some earlier works adopts the co-location strategy, which places all models on each device and applies data and model parallelism techniques like ZeRO to parallelize the training \citep{rajbhandari2020zero}. An obvious drawback of this strategy is that, each model shares a limited memory to generate and process the data. Therefore, the training throughput becomes a bottleneck, especially for the generation phase. Nowadays, distributed strategy becomes a popular solution, which allocate each model to an individual processing group with an exclusive group of devices. 

There are optimization techniques to mitigate the memory limitation of the co-located strategy. \textit{DeepSpeed-Chat} \citep{yao2023deepspeedchateasyfastaffordable} implemented a Hybrid Engine to co-locate the inference policy and training policy. During the training phase, it utilizes ZeRO for efficient memory management, while switching the model weights to tensor parallelism for the generation phase. \textit{HybridFlow} \citep{sheng2024hybridflow} further developed a 3D-HybridEngine, allowing different 3D (data, tensor and pipeline) parallel configurations in the two stages and enabling zero memory redundancy and minimized communication overhead during the transition between two stages. \textit{NeMo-Aligner} \citep{shen2024nemoalignerscalabletoolkitefficient} co-located the models with the same architecture but different weights, for example policy model and reference model. Due to the sequential order of the execution, it first offloads the reference’s weights to CPU, and then swap them with the policy’s weights at the reference model inference step.

\paragraph{\textbf{Distributed Model Placement.}}
The distributed strategy requires the communications between the processing groups. \textit{FlexRLHF} \citep{flexrlhf} designed a pipeline manner process to overlap the computation and communication stages and minimize the communication overhead of the generations. However, the weight communication between training policy and inference policy is inevitable. In \textit{OpenRLHF} \citep{hu2024openrlhfeasytousescalablehighperformance}, the measured weight communication time demonstrates a faster-than-linear increases with the model size. A 70B model weights communication consumes 111 seconds. When the model size is scaled to the 405B, the weights communication time is estimated to be over 900 seconds based on the trends. The bottleneck is not on the GPU inter-node bandwidth, but on the expensive weights reloading operation. We will discuss more details on how to optimize the weights communication in the following sections.

\paragraph{\textbf{Asynchronous RL and Off-policyness}}
The asynchronous communication strategy between learner and actor have been a long standing practice to improve throughputs in large-scale RL. Asynchronization allows distributed models to execute in parallel without blocking each other, which fundamentally improves the training speed. It can lead to big boost in training efficiency, in particular for value-based RL (e.g., AlphaZero~\citep{alphazero} versus AlphaGoZero~\citep{alphagozero}, OpenGo~\citep{tian2019elf}), in which the issues of off-policyness may not be an issue. However, for policy-based RL training, which is a popular approach for LLM post-training, the issue of off-policyness can be more severe, leading to a fundamental question to explore. The algorithmic component of \textsc{LlamaRL} draws on a long history of off-policy learning, where importance sampling corrections have been applied in various forms \citep{precup2000eligibility,jie2010connection,wang2016sample,munos2016safe,Schulman2017PPO,espeholt2018impala}. Most closely related to our approach is IMPALA \citep{espeholt2018impala} where they apply importance correction to policy and critic learning. In \textsc{LlamaRL}, we have generalized this formulation more broadly for LLM fine-tuning, with combinations of different reward models, critics, etc. 

\paragraph{\textbf{Scaling up.}} Notably, non-LLM RL training with simulated environment often deal with very small models (on the order of millions of parameters) \citep{mnih2016asynchronous,espeholt2018impala,kapturowski2018recurrent}, and as a result, at training time the inference models are usually placed on CPU hosts. There are larger sized RL training examples AlphaStar~\citep{vinyals2019alphastar} and OpenAI Five~\citep{berner2019dota}, among others, but generally the model scales are still small by today's LLM standard. Scaling such ideas efficiently to the LLM scenarios where the model size has grown 100-1000x, poses major engineering challenges. Recently, \citep{noukhovitch2024asynchronousrlhffasterefficient} proposed an asynchronous RL training for 2.8B and 8B LLMs on multiple GPUs of a single host. 
In the context of large-scale RL, \citet{kimi2025} introduced the partial rollout method to enhance training efficiency during long-context generation. Their approach segments long responses into smaller portions across multiple iterations, while still maintaining a synchronous, sequential execution where the trainer and generator remain co-located.

\section{High Level Strategies}\label{sec:strategies}

We detail a number of key high level strategies that enable large-scale training with \textsc{LlamaRL}. For simplicity, we do not consider the reference model, and discuss algorithmic settings akin to the one showed in Figure~\ref{fig:example_rlhf_workflow}, where the critic model is removed and we apply rule-based scorers.

\subsection{Co-located model offloading} 
As mentioned before, the co-located model placement strategy limits the available memory usage for each model, and forces the models to share the same parallelisms. Therefore, our initial attempt is offloading certain modules with expensive memory or computation overhead to dedicated groups, namely distributed model placement. We offload the generation phase since it is a memory-bound process with major execution time contribution. The inference policy and training policy models are thus in different processing groups, implemented by Meta internal inference library and open-sourced \textit{FSDP}, respectively. Rule-based scorers are allocated with the training policy model, and computed with lightweight Python programs.

After offloading, the communications between processing groups are essential. There are mainly two types of communication. One is the communication of generations from the inference policy model, reward model or etc. Another is the communication of the model weights or logits. Instead of partial co-location of inference and training policy models such as Hybrid Engine which can save the weights communication overhead, we developed a distributed direct memory access (DDMA) method to achieve second-scale weights communications. The major reason of using the fully distributed model placement is because it provides a flexibility to adopt the asynchronous RL training.

\subsection{Asynchronous off-policy RL}
After switching to distributed groups to hold the models, the framework still suffers low GPU memory utilization and poor scalability due to the sequential execution order of the models in RL training. To overcome these issues, we apply an asynchronous off-policy learning algorithm for the \textsc{LlamaRL} framework, unblocking each component of the RL workflow. As comparison, Figure 2 demonstrates the process of (a)  synchronous on-policy RL and (b)  asynchronous off-policy RL.

For asynchronous RL, each processing group runs in parallel and communicates at the end of each training step. This design enables the trainer and generator components to compute concurrently, reducing overall training time. In contrast, the synchronous on-policy RL alternates between trainer and generator steps, blocking one another while the other component is running, which slows down the oevrall training. 

To further mitigate straggler effects—both within data-parallel groups and between trainer and generator components—we adopt a strategy inspired by partial rollouts~\citep{kimi2025}. Specifically, we break down long response generations, cache incomplete prompts, and resume them in subsequent iterations.

However, asynchronous learning inevitably introduces off-policyness: the policy model is trained on samples generated by a previous iteration of itself. Despite the off-policyness, our results suggest that with the help of off-policy corrections, the impact of off-policy training is negligible compared to fully on-policy training. We will discuss the algorithmic etails further in Section~\ref{sec:offloading_correctness}.
\begin{figure}[h!]
    \centering
    \includegraphics[width=1\linewidth]{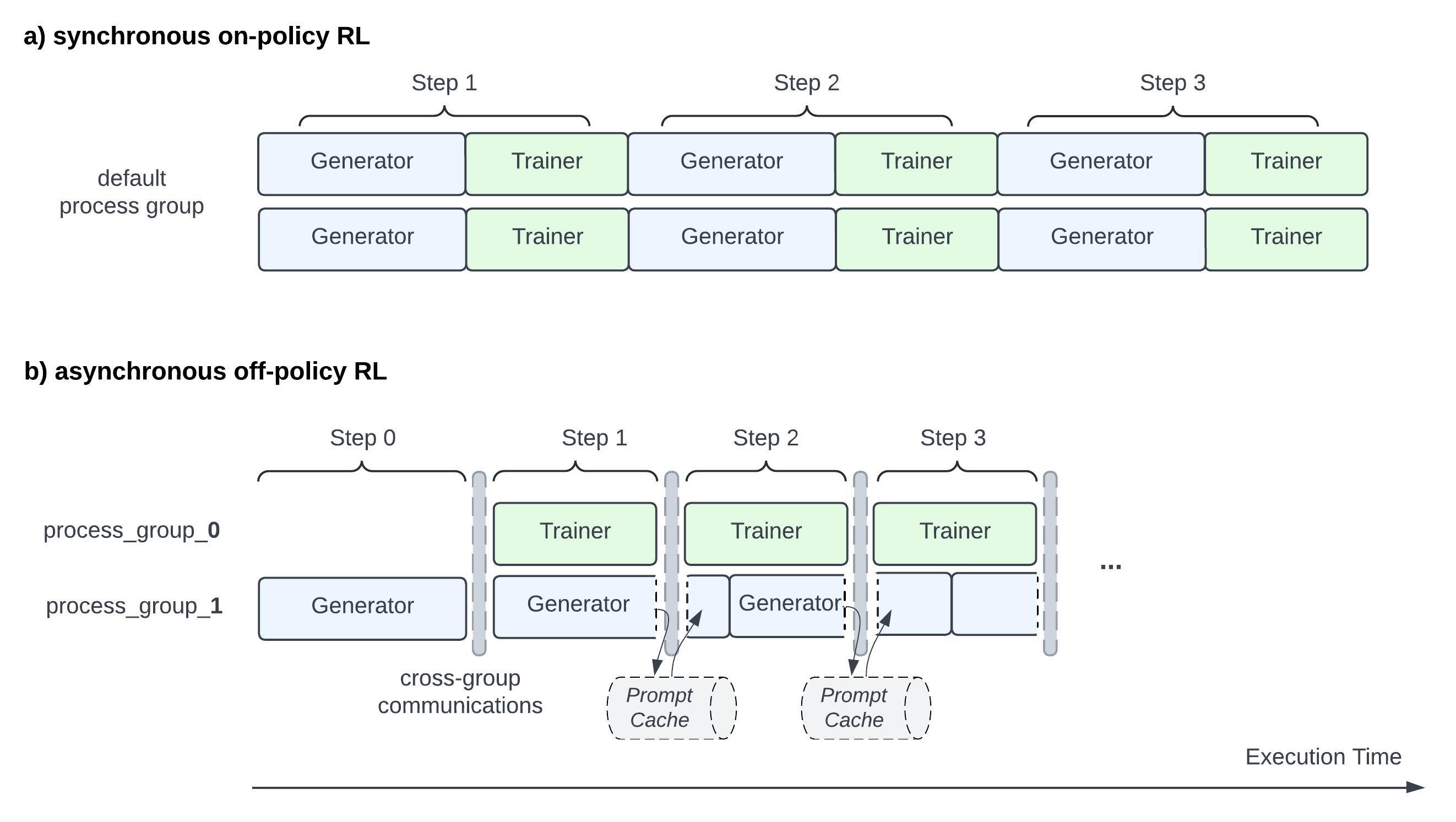}
    \caption{Process demonstrations of (a) synchronous on-policy RL, and (b) asynchronous off-policy RL. For asynchronous RL, the generator and trainer run in parallel without blocking one another, in contrast to synchronous RL. This design accelerates the overall training process significantly, without compromising model quality.}
    \label{fig:example_rlhf_workflow}
\end{figure}

\subsection{Fine-grained parallelisms and quantization}
From the example in Figure 2(b), GPU idle bubbles can be observed from the processing group 1 since the execution time is not perfectly matched among all the processing groups. To further optimize training efficiency, flexible and fine-grained parallelisms and quantization must be supported. In \textsc{LlamaRL}, models in different processing groups can use different parallelisms and data precision. We provide a few examples. 
\begin{itemize}
    \item \textbf{Data Parallelism} Back to the co-located stragety, all the models are stacking together, which means the dp size of the models are required to be the same. In \textsc{LlamaRL}, the dp size of the models are decoupled, which provides the flexibility to better align the throughput of different processing groups and improve the training efficiency.
    \item \textbf{Model Parallelism} Model parallelism is decouple among different processing groups. Based on the empirical knowledge, the smaller mp size (especially when mp > 8) in the inference side can significantly reduce the inter-node communications and thus reduce the generation time. 
    \item \textbf{Pipeline Parallelism} Pipeline parallelism distributes model computations across different pipeline stages. By decoupling the pp size among processing groups, it provides the flexible to fine tune the execution time of each processing group for better time-wise alignment.
    \item \textbf{Quantization} Quantization (\texttt{fp8} or \texttt{fp4}) on the inference side allows models to do generations with a smaller mp size, providing faster generation speed. 
\end{itemize}
The functionality is mainly implemented in the communication channels. More details can be found in the next Section.

\section{System Design}\label{sec:design}
In this section, we introduce the high level architecture and interfaces of \textsc{LlamaRL}. Unlike prior works such as OpenRLHF and HybridFlow that rely on third-party orchestration systems (e.g., Ray \citep{moritz2018ray}), \textsc{LlamaRL} is implemented purely on top of native PyTorch. This design choice reduces complexity, boosts reliability, and simplifies user workflows. Below, we explain the key architectural components, the interfaces that enable users to build RL algorithms, and an example of how a typical RL pipeline is assembled within \textsc{LlamaRL}.
\subsection{High Level Architecture}
\begin{figure}[h!]
    \centering
    \includegraphics[width=1\linewidth]{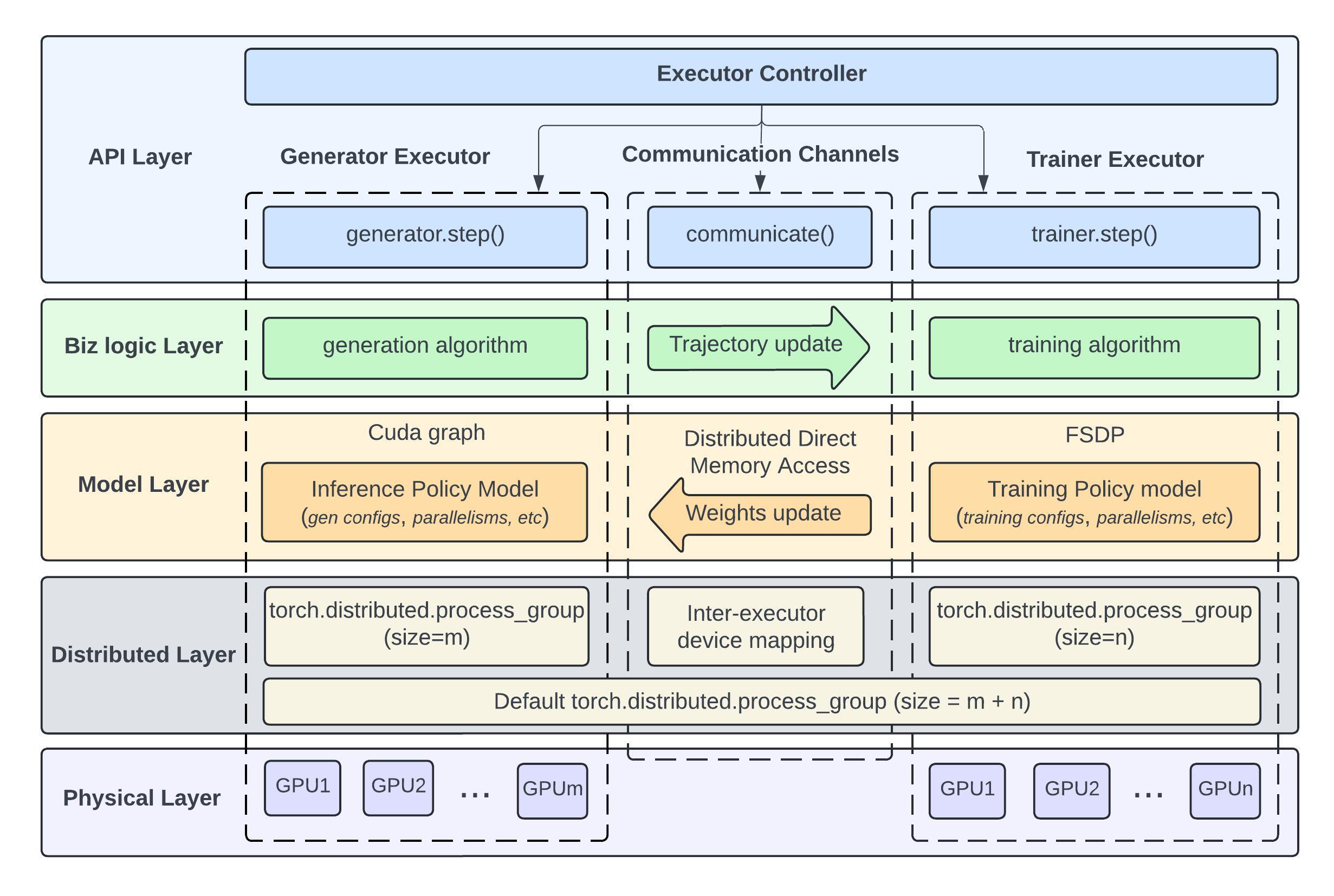}
    \caption{An example of \textsc{LlamaRL} architecture with a communication channel between two executors, managed by a single controller.}
    \label{fig:architecture}
\end{figure}
As shown in Figure~\ref{fig:architecture}, \textsc{LlamaRL} orchestrates several \textbf{executors}, each responsible for one or more stages of the RL pipeline such as policy inference, reward calculation and policy training. These executors are attached with distributed processing groups with their own GPU size and parallelism configurations. Since the nature of the distribution, executors rely on \textbf{communication channels} to pass data (e.g., prompts, generated trajectories, rewards) and updated model weights among them. Rather than relying on a heavyweight external scheduler, a single \textbf{controller} is responsible to launch, coordinate, and synchronize executors and data flows as needed.

\subsubsection{Executor}
RL algorithms such as PPO utilize multiple models. For the allocation of the multiple models, a straightforward strategy is stacking all the models across all the GPUs with the same parallelisms, namely co-located strategy. Due to the drawbacks discussed in the Section 2, a distributed model placement strategy is more preferred.

The \textsc{LlamaRL} uses the distributed strategy, and further encapsulates the distributed models in the executors. An executor is a self-contained unit that runs on a predefined number of GPUs with a specific parallelism configuration. A base class of an executor contains the following functions:

\begin{enumerate}
    \item \texttt{Init}: Load or construct a model, configure the parallel environment (e.g., FSDP, pipeline parallel, or tensor parallel groups), and prepare any needed data loaders or tokenizers.
    \item \texttt{step}: Executes one RL step (e.g., a forward/backward pass for policy training or a forward pass to generate samples).
    \item \texttt{save\_checkpoint}: Periodically saves the model state, optimizer state, and other progress markers.
    \item \texttt{get\_output / get\_model}: Exposes outputs or internal states (e.g., recently generated sequences) so that other executors can receive them via a communication channel.
\end{enumerate}

Different executor classes are further implemented inheriting the base executor class. For example,  figure~\ref{fig:architecture} demonstrates two executors, generator and trainer for the RLOO algorithm, working for inference and training phases respectively. Rule-based reward models are included in the trainer. By isolating RL stages into different executors, \textsc{LlamaRL} makes it easier to scale up or swap out individual components---such as changing the model parallelism or switching from PPO to another RL variant---without monolithic code changes.

\subsubsection{Communication Channel}
A communication channel defines a directed link of communication paradigm between executors. It includes:

\begin{itemize}
    \item A \texttt{name}, for referencing data or weights being transferred.
    \item An \texttt{outbound\_executor} (the sender) and an \texttt{inbound\_executor} (the receiver).
    \item A \texttt{communication\_type}, indicating how data is distributed:
    \begin{itemize}
        \item \textbf{BROADCAST}: Outbound data is sent identically to each inbound process.
        \item \textbf{SCATTER}: Outbound data is partitioned and delivered in chunks to different inbound processes.
        \item \textbf{GATHER}: Data from multiple outbound processes is aggregated by a single inbound executor.
    \end{itemize}
\end{itemize}

The complicated interactions between executors can therefore be defined via a list of communication channels. Crucially, they also handle \texttt{model weight updates}, which is a central challenge in large-scale RL.

\subsubsection{Controller}
Finally, a \textbf{Controller} ties executors and channels together into a single training process, as shown in the Algorithm~\ref{alg:executor_controller}. Under the hood, it initializes the distributed environment, launches each executor on the correct GPU ranks, and orchestrates each training ``tick'' by:

\begin{enumerate}
    \item \texttt{Setting the iteration step} for each executor (so each knows which mini-batch or prompts to process).
    \item \texttt{Executing communication channels} in the correct order. For example, the Generator executor's outputs are passed to the Reward executor, then the Reward executor's outputs flow to the Policy Trainer.
    \item \texttt{Triggering each executor's step}. In a single iteration, multiple executors run asynchronously, each performing its own computations without waiting for others to finish.
    \item \texttt{Saving checkpoints}. Periodically, each executor writes out model or state data independently (or under user-defined triggers).
\end{enumerate}

Because each executor is an autonomous SPMD process, the Controller remains concise and easy to reason about---essentially just an event loop.

\begin{algorithm}[h]
\caption{ExecutorController Training Loop Overview}
\label{alg:executor_controller}

\KwIn{
  \texttt{executor\_group}: \texttt{List[Executor]},
  \texttt{communication\_channels}: \texttt{List[CommunicationChannel]}, \\
  \texttt{max\_steps}: \texttt{int},
  \texttt{init\_communication\_channels}: \texttt{Optional[List[CommunicationChannel]]}
}
\KwOut{Control the training by coordinating executors}

\textbf{Class:} \texttt{ExecutorController} \; \
\Indp
    \texttt{executor\_context}: \texttt{ExecutorContext} $\leftarrow$ Init a context object which stores all the torch distributed groups across different different processes in different executors. These information will be used by communication channels \; \
    \texttt{local\_executor}: \texttt{Executor} $\leftarrow$ The executor instance local to the current process. Each process runs the same controller logic independently (SPMD model). \;
\Indm
\textbf{Function:} \texttt{init()} \; \
\Indp
    Set \texttt{torch.distributed.group.WORLD} to local executor group\; \
    Call \texttt{local\_executor.init()}\; \
\Indm
\textbf{Function:} \texttt{run()} \; \
\Indp
    \While{\texttt{local\_executor.curr\_step} $<$ \texttt{max\_steps}}{ \
        Call \texttt{local\_executor.set\_step()}\; \
        Set \texttt{torch.distributed.group.WORLD} to global group\; \
        \ForEach{channel in communication channels}{
            Call \texttt{communicate(channel)}\;
        }
        Set \texttt{torch.distributed.group.WORLD} to local group\; \
        Call \texttt{local\_executor.step()}\; \
        Call \texttt{local\_executor.save\_checkpoint()}\;
    }
    Call \texttt{executor\_context.post\_training\_step()} \; \
    Call \texttt{executor\_context.shutdown()}\; \
\Indm
\textbf{Function:} \texttt{communicate(channel)} \; \
\Indp
    \If{\texttt{local\_executor == channel.outbound\_executor}}{
        Call \texttt{SEND\_OPS[channel.communication\_type](local\_executor, channel)}\;
    } \If{\texttt{local\_executor == channel.inbound\_executor}}{
        Call \texttt{RECV\_OPS[channel.communication\_type](local\_executor, channel)}\;
    }
\Indm
\end{algorithm}
\subsection{Distributed Direct Memory Access (DDMA) Weights Update}
\begin{figure}[h!]
    \centering
    \includegraphics[width=0.7\linewidth, height=0.4\textheight]{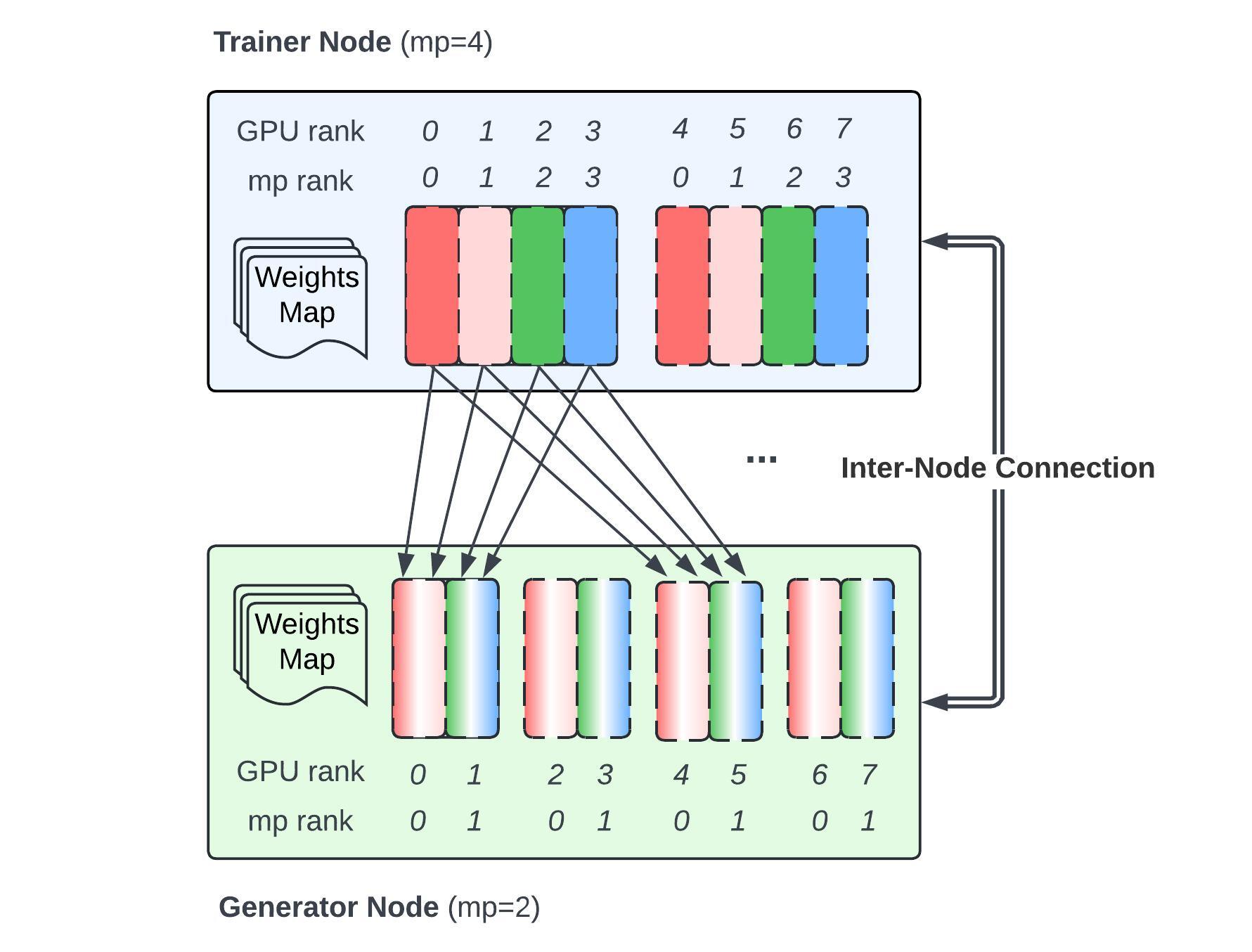}
    \caption{Model weights synchronization via distributed direct memory access (DDMA).}
    \label{fig:weights_update}
\end{figure}
One bottleneck for large-scale RL is the step-wise synchronization of model weights between executors, i.e., sending updated policy parameters from a training executor to an inference executor. Conventional methods are usually performed in a parameter server (PS)~\citep{li2014scaling_ps}, which functions as a key-value store for the current model parameters. However, the traditional parameter server architecture presents two major bottlenecks: (1) insufficient network bandwidth—especially when the PS is co-located with the worker nodes~\citep{chilimbi2014adam} and (2) an inefficient software stack that cannot scale effectively \citep{potluri2013gpudirect}. In particular, data copying, aggregation, and optimization overheads in the PS software stack hinder throughput, even on high-bandwidth networks.

We propose a \textbf{Distributed Direct Memory Access (DDMA)} method, which is dedicated for efficient weights synchronization in large-scale RL training with language models. As illustrated in figure~\ref{fig:weights_update}, DDMA introduces key optimizations that target the aforementioned bottlenecks:
\begin{enumerate}
\item \textbf{Fully Distributed:} Each GPU only stores or updates its assigned shards, leveraging the same tensor and parallel groups used during training. This eliminates memory bottlenecks on any single node and enables the linear scalability in both memory and network throughput up to thousands of GPU cards and several terabytes model weights with about 2 seconds weights update delay.
\item \textbf{Direct Memory Access for Weights Update:} By leveraging NVIDIA's intra-node NVLink and inter-node Infiniband interconnects, DDMA supports zero-copy communication by allowing direct memory transfers between CUDA memory regions across devices—bypassing CPU memory altogether. This dramatically reduces latency in the weights update path and alleviates bottlenecks caused by data movement that can inflate step times by hundreds of seconds at large scales~\citep{hu2024openrlhfeasytousescalablehighperformance}.
\end{enumerate}

\subsection{Example: RL with Offloaded Reward and Generator}
\begin{algorithm}[H]
\caption{Main Entry Point for a RL algorithm with reward and generator offloaded}
\label{alg:main_entrypoint}
\KwIn{
  \texttt{params}: \texttt{PPOParams},
  \texttt{num\_generator\_gpus}: \texttt{int},
  \texttt{num\_reward\_gpus}: \texttt{int}
}
\KwOut{Initialize distributed components and run training loop}

\textbf{Function:} \texttt{main(params, num\_generator\_gpus, num\_reward\_gpus)} \;
\Indp
    Call \texttt{init\_torch\_distributed} \;
    
    \texttt{num\_all\_gpus} $\leftarrow$ \texttt{torch.distributed.get\_world\_size()}\;
    \texttt{num\_trainer\_gpus} $\leftarrow$ \texttt{num\_all\_gpus - num\_generator\_gpus - num\_reward\_gpus}\;

    Init \texttt{base\_generator: BaseGenerator(params, num\_generator\_gpus)} \CommentSty{extends \texttt{Executor}} \;
    Init \texttt{ppo\_trainer: PPOTrainer(params, num\_trainer\_gpus)} \CommentSty{extends \texttt{Executor}} \;
    Init \texttt{reward\_calculator: RewardCalculator(params, num\_reward\_gpus)} \CommentSty{extends \texttt{Executor}} \;

    Init \texttt{executor\_controller: ExecutorController( \\
        \quad executor\_group=[base\_generator, reward\_calculator, ppo\_trainer], \\
        \quad communication\_channels=[ \\
        \qquad WeightsCommunicationChannel(\\
        \qquad "policy\_model", ppo\_trainer $\rightarrow$ base\_generator, DDMA\_WEIGHTS\_UPDATE), \\
        \qquad CommunicationChannel(\\ 
        \qquad "completions", base\_generator $\rightarrow$ reward\_calculator, GATHER), \\
        \qquad CommunicationChannel(\\
        \qquad "completions\_with\_reward", reward\_calculator $\rightarrow$ ppo\_trainer, SCATTER)
        ], \\
        \quad max\_steps=params.max\_steps \\
    )}\;
    Call \texttt{executor\_controller.run()}\;
\Indm
\end{algorithm}

Algorithm~\ref{alg:main_entrypoint} demonstrates how you might assemble a \texttt{policy trainer}, a \texttt{reward calculator}, and a \texttt{generator} into a single \textsc{LlamaRL} job. Key points include:

\begin{itemize}
    \item \textbf{Executor Initialization.} Each executor is allocated a distinct group of GPU processes (lines~4--7 in the pseudocode).
    \item \textbf{Communication Channels.} We define channels for passing updated policy weights to the generator, generated sequences to the reward model, and reward-scored sequences back to the trainer (lines~10--16).
    \item \textbf{Offloading.} The reward model and the generator run on different sets of GPUs so that each can specialize its parallelism strategy.
    \item \textbf{Controller.} Once set up, the \texttt{ExecutorController.run()} method handles the asynchronous scheduling and repeated iteration (lines~18--19).
\end{itemize}

This modular design allows you to swap in alternative reward models, add additional models (e.g., critics, reference policies), or implement a different RL optimization algorithm without altering the rest of the pipeline.

By separating each component of the RL pipeline into its own \texttt{Executor} and using \texttt{CommunicationChannels} for data exchange, \textsc{LlamaRL} provides a highly scalable yet flexible framework. The distributed approach coupled with \texttt{DDMA} greatly alleviates GPU-memory pressure and reduces idle time in the RL pipeline. In the following sections, we evaluate how these design choices translate into real-world speedups and scaling performance.

\section{\textsc{LlamaRL} Off-policy Learning}
\label{sec:offloading_correctness}

In \textsc{LlamaRL}, we adopt an importance sampling based off-policy learning framework to the large scale asynchronous training process. To set up notations, we let $x\in \mathcal{X}$ be the set of prompts for RL training and $y\in \mathcal{Y}$ be possible generations. The generation $y=y_{1:T}$ is made up of individual tokens $(y_t)_{t=1}^T$ where $T$ denotes the generation length. Below, we elaborate on an example using the policy gradient algorithm.

Let the learner model be $\pi$, which defines distributions over generations $y=y_{1:T}$ in an auto-regressive manner $\pi\left(y\;|\;x\right) = \Pi_{t=1}^T \pi\left(y_t\;|\;x,y_{1:t-1}\right)$. The generations are sampled by an actor model, defined through another network in an auto-regressive manner through inference $y_t\sim \mu\left(\cdot\;|\;x,y_{1:t-1}\right)$. Due to asynchronous training, $\mu$ differs from $\pi$ (e.g., lagging by one update step) and hence the two networks produce different distributions. The importance sampling ratio 
$\frac{ \pi\left(y_t\;|\;x,y_{1:t-1}\right)}{\mu\left(y_t\;|\;x,y_{1:t-1}\right)}$ adjusts for the off-policy discrepancy between the two networks during learning.

During training, generation $y_t$ along with the probability $\mu\left(y_t\;|\;x,y_{1:t-1}\right)$ are communicated from the generator to the trainer. Our final update to the learner network is an importance sampling weighted policy gradient
\begin{align*}
   \sum_{t=1}^T \min\left(\frac{ \pi\left(y_t\;|\;x,y_{1:t-1}\right)}{\mu\left(y_t\;|\;x,y_{1:t-1}\right)}, \rho\right) \cdot A(x,y_{1:t})\nabla \log \pi\left(y_t\;|\;x,y_{1:t-1}\right),
\end{align*}
where $A(x,y_{1:t})= r(x,y_{1;t})-v(x,y_{1:t})$ is the per-token advantage estimate. Here, $r(x,y_{1;t})$ is the per-token reward which can be computed from a reward model \citep{Christiano2017,Ouyang2022RLHF}. For many applications, the reward can be computed with rule-based scorers \citep{gehring2024rlef,lightman2023let}, resulting in a single reward per generation $r(x,y)$. We will focus on this case in the ensuing sections. The reward is often added with the KL regularization $\lambda_\text{KL}\cdot D_\text{KL}\left(\pi(y|x),\pi_\text{base}(y|x)\right)$ adjusted by parameter $\lambda_{\text{KL}}$, to penalize deviation from the reference policy $\pi_\text{base}$. 

The baseline $v(x,y_{1:t})$ is designed for variance reduction, and there are a few ways to compute it. For example, it can be computed from a critic model \citep{Schulman2017PPO} or using branched rollout \citep{schulman2015trust,kazemnejad2024vineppo}. Another simple yet effective approach, which we will focus on below and compatible with the workflow in Figure~\ref{fig:example_rlhf_workflow}, is that from a single prompt $x$, we sample $n$ generations and compute the baseline as the mean rewards from these generations $y_i,i\in[n]$ with reward $r(x,y_i)$. This produces 
a constant baseline for the whole sequence of tokens $
    v(x,y_{1:t}) = \frac{1}{n} \sum_{i=1}^n r_i(x,y) $
which has proved reasonably effective for a number of applications \citep{ahmadian2024back,shao2024deepseekmath}.

Noticeably, we apply a fixed constant $\rho$ to clip the importance sampling ratio $\frac{ \pi\left(y_t\;|\;x,y_{1:t-1}\right)}{\mu\left(y_t\;|\;x,y_{1:t-1}\right)}$ from above. This clipping strategy introduces a bias and variance trade-off in the gradient estimation: when $\rho$ is assigned a large value, the estimation has less bias but more variance due to the importance sampling weights. We call the overall algorithm \texttt{AIPO}
(Asynchronous Importance weighted Policy Optimization).

\paragraph{\textbf{Mitigating training instability.}} In practice, we find clipping constant of $\rho\in[2,10]$ seem to work generally well and help relieve much of the training instability arising in large-scale experiments. Such training instability is usually manifested as sudden or slow drops in training performance, and tends to arise more often due to sophisticated interactions between  asynchronous training and other factors such as model architecture, data mixtures, multiple modalities, additional training and inference optimization, etc.

\paragraph{\textbf{Comparison to PPO clipping.}} For readers familiar with the topic, it is clear that the above correction differs slightly from the double-sided clipping in PPO \citep{Schulman2017PPO} and more recently GRPO \citep{shao2024deepseekmath}. We have early ablations showing that the single sided clipping works better in our setting. We also argue that there are good reasons for the different algorithmic designs: in a nutshell, the PPO clipping is based on the trust region policy optimization formulation, which differs subtly from the asynchronous training setting in \textsc{LlamaRL}. We provide additional discussions in Appendix~\ref{apepndix:off-policy-comparative}.

\section{Theoretical justification of \textsc{LlamaRL} Speedup} \label{sec:theory}
In this section, we aim to explain the efficiency gains achieved by \textsc{LlamaRL} from a theoretical perspective. We hope this sheds light on where the speed-up comes from, and helps practitioners build a good mental models for such training systems in general. 

More precisely, we will prove the following theorem, stated informally below for ease of understanding:

\begin{theorem}[\textbf{Theoretical speed up of LlamaRL}, informal version of Theorem~\ref{main_theorem}] 
\label{informal}
Given the same hardware budget and memory constraints, \textsc{LlamaRL} can be configured to perform reinforcement learning (RL) strictly faster than any possible configuration of a traditional synchronous RL framework. 
\end{theorem}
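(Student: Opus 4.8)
\section*{Proof proposal}

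The plan is to cast both frameworks in a common scheduling abstraction and then exhibit, for every synchronous configuration, an asynchronous one that strictly dominates it. Concretely, I model a \emph{configuration} as a way to partition the $N$ available GPUs among the pipeline stages (generation, reward scoring, policy training), together with a per-stage parallelism choice (FSDP/TP/PP degrees and precision), each choice inducing per-stage wall-clock times $g(\cdot)$ and $t(\cdot)$ (the cheap reward scoring is folded into the adjacent stage) subject to the per-GPU memory budget. In a synchronous framework the stages are serialized, so one RL iteration costs $\tau_{\mathrm{sync}} = g + t + c$, where $c \ge 0$ collects the weight-transfer and mode-switch (resharding) overhead. In \textsc{LlamaRL} the stages are pipelined across disjoint executor pools, so in steady state one iteration costs $\tau_{\mathrm{async}} = \max(\tilde g, \tilde t) + \tilde c$, where the tildes denote times on the (smaller) executor pools and $\tilde c$ is the DDMA synchronization cost, which Section~\ref{sec:design} argues is small and, crucially, can be overlapped with generation so that it does not enter the steady-state cost additively.

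The heart of the argument is a domination lemma with two cases. If the synchronous baseline is already distributed, i.e.\ it commits to a partition $N = N_g + N_t$, then running the \emph{same} partition and parallelism asynchronously gives $\tau_{\mathrm{async}} \le \max(g, t) + \tilde c$, which is strictly below $g + t + c$ whenever $\min(g, t) > 0$ and the async overhead is no worse than the synchronous one --- and $\min(g,t) > 0$ always holds, since both generation and training take nonzero time. The remaining case is a co-located (or otherwise async-unfriendly) synchronous configuration that uses all $N$ GPUs for each stage in turn, costing $g(N) + t(N) + c$. Here I would invoke mild regularity assumptions on the speedup curves: each stage time is non-increasing in its GPU count; generation is memory-bound and, with quantization and offloading, is feasible and near-minimal already on $N_g \ll N$ GPUs (diminishing returns); and training admits near-linear scaling and remains memory-feasible on the complementary $N - N_g$ GPUs. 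By monotonicity and an intermediate-value argument one can pick a partition balancing $\tilde g \approx g(N)$ against $\tilde t$, which is at most $t(N)\cdot N/(N - N_g)$; the strict inequality $\max(\tilde g, \tilde t) < g(N) + t(N)$ then reduces to the statement that the generation penalty $g(N_g) - g(N)$ for using fewer GPUs is smaller than the training time $t(N) > 0$, which the diminishing-returns assumption supplies.

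Two bookkeeping steps finish the proof. First, the steady-state per-iteration cost is the right object to compare: the one-time pipeline fill and drain contribute an additive $O(1)$ amortized over the run, and the off-policy correction of Section~\ref{sec:offloading_correctness} ensures the pipelined schedule yields the same learning signal up to bounded staleness, so ``faster per iteration'' genuinely means ``faster RL.'' Second, I take the minimum over asynchronous configurations on the left and keep the synchronous configuration arbitrary on the right, so the bound holds uniformly, giving the ``strictly faster than any configuration'' claim.

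I expect the main obstacle to be the co-located case: pinning down the weakest regularity assumptions on the per-stage speedup and memory curves under which re-partitioning beats a single all-GPU synchronous run, stated so that they are both faithful to real GPU-cluster behavior and clean enough for a short proof. A secondary subtlety is making sure the DDMA term $\tilde c$ never erodes the gain, which relies on the (true but assumption-laden) fact that weight synchronization overlaps with computation and therefore drops out of the steady-state cost.
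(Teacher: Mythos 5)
Your scheduling abstraction diverges from the paper's argument, and in the one case that actually matters for the theorem it has a genuine gap. The paper's baseline is precisely your ``co-located'' case: a synchronous framework in which trainer and generator share \emph{all} GPUs and a common model-parallel degree $m$, so your easy case (a synchronous baseline that already commits to a partition $N=N_g+N_t$, beaten by $\max(g,t)<g+t$) does not address the claim. For the co-located case, the paper derives the strict gain from the \emph{memory} structure, which your model never quantifies: co-location forces the trainer footprint $4W_0+A_tb_t$ and the generator footprint $W_0+K_gb_g$ (Table~\ref{mem}) to fit jointly under $M_0$, which inflates the shared $m$ and hence the step time $\frac{B_0}{G_0}m(\eta_t+\eta_g)$; decoupling splits this into two independent constraints, letting $m_t,m_g$ each saturate its own budget (Lemma~\ref{constraint_eq2}) and letting batch sizes be optimized independently under Assumption~\ref{ass}. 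Strictness then comes from pure algebra --- $(X+Y)(\eta_t+\eta_g)>X\eta_t+Y\eta_g$ --- followed by the $\theta$-balancing identity that turns the sum into the max (Lemma~\ref{decoupled_optim}). In your version, the source of the gain in the co-located case is instead an \emph{assumed} diminishing-returns property of $g(\cdot)$ in GPU count (``generation is near-minimal on $N_g\ll N$ GPUs''), which is a different and stronger hypothesis than the paper's single monotonicity assumption on per-sample time versus batch size, and which arguably assumes the conclusion (that the generator can give up most of the GPUs at negligible cost).

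Moreover, your intermediate-value/balancing step is not closed: to get $\max(\tilde g,\tilde t)<g(N)+t(N)$ you need simultaneously $g(N_g)-g(N)<t(N)$ and $t(N)\cdot N_g/(N-N_g)<g(N)$, and you do not show a single $N_g$ satisfies both; whether one exists depends on the relative magnitudes of $g$ and $t$ and the shape of the scaling curves, which your regularity assumptions leave open (you flag this yourself as the main obstacle). The paper sidesteps this by posing both frameworks as constrained optimizations \eqref{baseline_prob} and \eqref{llamarl_prob} over $(b_t,b_g,m_t,m_g,\theta)$, proving constraint saturation (Lemmas~\ref{constraint_eq}, \ref{constraint_eq2}) and exact load balancing in $\theta$ (Lemma~\ref{decoupled_optim}), so no case analysis on $g$ versus $t$ is needed. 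Your bookkeeping about steady-state cost, pipeline fill, and overlapped DDMA is reasonable but orthogonal: the paper simply assumes communication overhead is negligible, and its theorem is a statement about per-step time under memory constraints, not about amortized schedules. To repair your proof you would either need to import the memory/model-parallelism model (at which point you recover the paper's argument) or rigorously state and justify the GPU-count scaling assumptions and prove existence of the balancing partition, which as written is missing.
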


At its core, the proof of Theorem \ref{informal} frames the search for optimal efficiency as a constrained optimization problem governed by GPU memory. We shall mathematically formulate both the synchronous and asynchronous frameworks and derive closed-form solutions for each. The key insight is that \textsc{LlamaRL}'s asynchronous design decomposes the shared memory constraint of the synchronous baseline into two independent constraints—one for the trainer and one for the generator. This removes the mutual dependency between the two components, allowing each to be optimized separately and hence yielding larger admissible region.

We start by introducing a few fundamental quantities based on structure of a typical RL training framework. Then, under reasonable assumptions that generally hold with modern hardware, we present a formal proof that our asynchronous design yields strictly higher efficiency than its traditional synchronous counterpart. Below, we begin by introducing a set of universal constants and notations that will be used throughout our derivations.

\begin{definition}
\label{consts} (\textbf{Universal constants of GPU})
Let $G_0$ denote the total number of available GPUs, $B_0$ the global batch size, $M_0$ the maximum available memory per GPU, and $W_0$ the memory footprint of a single model replica. Let $b_t$ represent the microbatch sizes used during training and $b_g$ represent the generator batch size which indicates the max decoding concurrency. Finally, let $m_t$ and $m_g$ denote the model parallel degrees (i.e., number of GPUs per model instance) for the trainer and generator.
\end{definition}

In Table~\ref{mem}, we list major model components (model weights, optimizer states, gradients, hidden activations, KV cache) that lead to primary consumption of per-GPU memory in both the trainer and generator. As an example, with model parallelism degree $m_t$, the memory consumption of a single model with size $W_0$ leads to a per-GPU memory consumption of $W_0/m_t$.

\begin{table}[ht]
\centering
\begin{tabular}{c|ccc}
\toprule
\bfseries{Name}                                                       & \bfseries{Location}  & \begin{tabular}[c]{@{}c@{}}\bfseries{Size per}   \\ \bfseries{GPU} \end{tabular} & \bfseries{Free-able} \\ \midrule
\begin{tabular}[c]{@{}c@{}}\emph{Trainer} \\ \emph{model}\end{tabular}   & Trainer   & $W_0/m_t$                                               & No        \\ \midrule
\emph{Adam states}                                                & Trainer   & $2W_0/m_t$                                              & No        \\ \midrule
\emph{Gradients}                                                  & Trainer   & $W_0/m_t$                                               & Yes       \\ \midrule
\emph{Activations}                                                & Trainer   & $A_t\cdot b_t / m_t$                                    & Yes       \\ \midrule
\begin{tabular}[c]{@{}c@{}}\emph{Generator} \\ \emph{model}\end{tabular} & Generator & $W_0 / m_g$                                             & No        \\ \midrule
\emph{KV cache}                                                   & Generator & $K_g\cdot b_g / m_g$                                    & No      \\ \bottomrule
\end{tabular}
\vspace{0.3em}
\caption{CUDA memory footprint of major components in a typical RL training iteration.  Here, $A_t$ and $K_g$ are constants determined solely by model size and are independent of batch size or model parallel sizes.}
\label{mem}
\end{table}

We introduce a few useful concepts that will simplify subsequent presentation and analysis.

\begin{definition}[\textbf{Processing time}]
\label{mini_batch_time}
For a given setup, let $\tau_t(b)$ and $\tau_g(b)$ denote the training and generation time for a batch of size $b$, respectively. Further, let us define the per-sample processing times as:
\begin{eqnarray}
\begin{split}
\eta_t(b) &\coloneqq \frac{\tau_t(b)}{b},
\eta_g(b) &\coloneqq \frac{\tau_g(b)}{b}.
\end{split}
\end{eqnarray}
\end{definition}

Note that the processing time is inversely proportional to throughput, i.e., smaller processing time means larger throughput. We now introduce the following key assumption underlying the main result:

\begin{assumption}[\textbf{Batch size scaling}]
\label{ass}
Both $\eta_t$ and $\eta_g$ are monotonically decreasing functions of batch size $b$. 
\end{assumption}
This assumption is well-supported by empirical observations in large-scale deep learning on modern GPUs. As batch size increases, per-sample processing time tends to decrease due to improved hardware utilization—such as higher arithmetic intensity, more efficient memory access patterns, and better kernel scheduling. On architectures like NVIDIA’s A100 and H100, larger batches better amortize compute and memory overhead, particularly for GEMM operations, resulting in sub-linear growth in total compute time and thus a decreasing trend in per-sample cost. 

This phenomenon has been widely observed and reported in prior work on efficient training and inference, such as DeepSpeed \citet{Rajbhandari2019ZeROMO}, where they leverage larger batch sizes to maximize GPU utilization and improve scalability. We empirically validated this assumption across a wide range of batch sizes in our setup, as shown in Figure~\ref{mono}.

\begin{figure}[h]
\centering
\includegraphics[width=\textwidth, height=6.25cm]{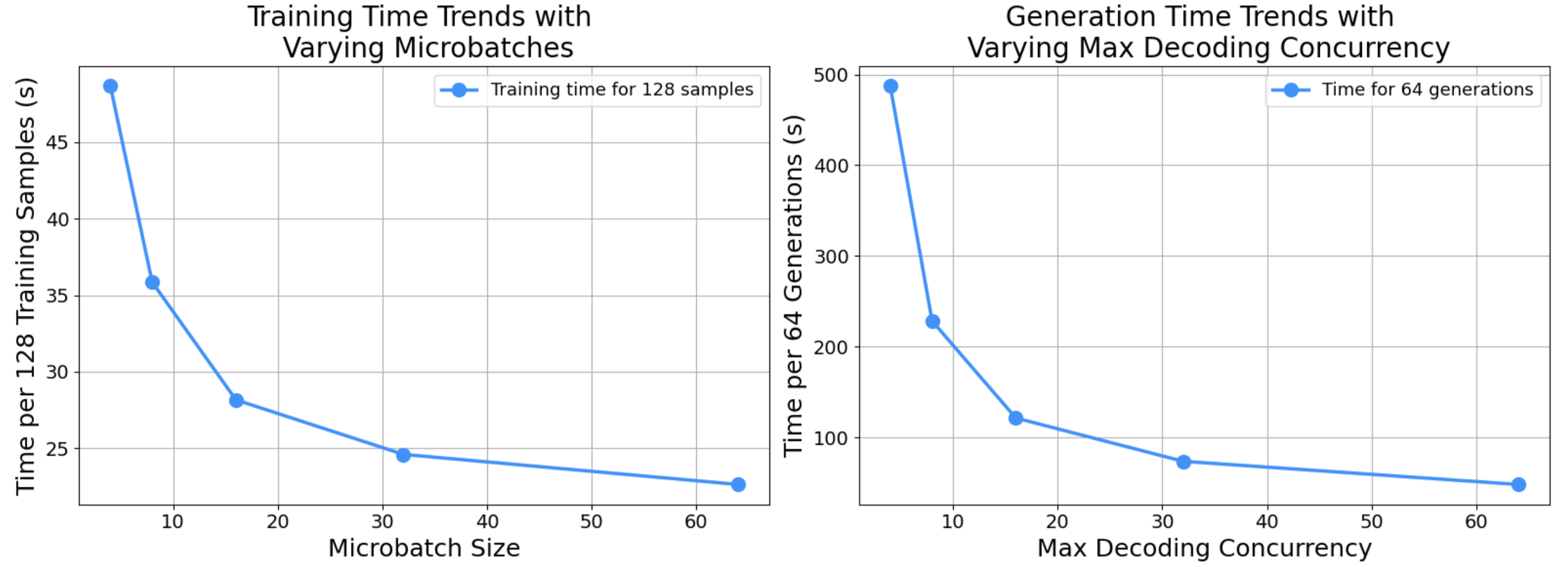}
\caption{\small{Empirical verification for Assumption~\ref{ass} on batch size scaling with the 70B model. \textbf{Left:} Training time per 128 samples decreases with increasing microbatch size. \textbf{Right:} Generation time per 64 completions decreases with increasing maximum decoding concurrency. Both results illustrate sub-linear growth in total processing time, supporting the assumption that per-sample time ($\eta_t$, $\eta_g$) decreases with batch size.}}
\label{mono}
\end{figure}

Now, we connect the above scaling discussion back to RL training speed-up,
by formally defining the notion of the RL step time for a given RL training framework:
\begin{definition}[\textbf{RL step time}]
Given RL framework $\mathscr{F}$, we define the step time $T_{\mathscr{F}}$ as a function of the microbatch sizes $(b_t, b_g)$ and model parallel sizes $(m_t, m_g)$. In particular, we consider:
\begin{itemize}
    \item $T_{\textsc{LlamaRL}}(b_t, b_g, m_t, m_g, \theta)$, which allows independent \texttt{mp\_size} trainer and generator as well as flexible GPU allocations. Here $\theta \in (0, 1)$ denotes the fraction of GPUs allocated to the trainer, with the remaining $(1 - \theta)\cdot G_0$ used by the generator.  
    \item $T_{\text{baseline}}(b_t, b_g, m)$, corresponding to the synchronous baseline, where the constraint $m_t = m_g \equiv m$ holds.
\end{itemize}
\end{definition}
Under mild assumptions—specifically, that inter-device communication overhead is negligible (as supported by our empirical measurements), and that the components listed in Table~\ref{mem} account for the dominant sources of CUDA memory consumption during RL training—we now establish a formal efficiency gain of \textsc{LlamaRL} over the synchronous baseline.

\begin{theorem}[\textbf{Theoretical speed-up of \textsc{LlamaRL}}]
\label{main_theorem}
Under the setup in Definition~\ref{consts} and Assumption \ref{ass}, there exist $b_t$, $b_g$, $m_t$, and $m_g$ satisfying the GPU memory constraint such that $T_{\textsc{LlamaRL}}(b_t, b_g, m_t, m_g)$ is strictly smaller than the minimum possible $T_{\text{baseline}}$.
\end{theorem}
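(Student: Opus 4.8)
The plan is to turn the statement into a head-to-head comparison of two explicit constrained minimization problems — one for the synchronous baseline, one for \textsc{LlamaRL} — eliminate the model-parallel degrees using the memory constraints, and then exhibit a single \textsc{LlamaRL} configuration that provably undercuts the synchronous optimum. First I would write the two step-time objectives. In the baseline, all $G_0$ GPUs generate the global batch and then train on it with a shared model-parallel degree $m$; with $G_0/m$ data-parallel replicas and microbatch $b$, each phase costs $\tfrac{B_0 m}{G_0}$ times the relevant per-sample time, so $T_{\text{baseline}}(b_t,b_g,m)=\tfrac{B_0 m}{G_0}\bigl(\eta_t(b_t)+\eta_g(b_g)\bigr)$ (here ``negligible communication overhead'' means a larger $m$ costs throughput only through having fewer replicas). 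In \textsc{LlamaRL} the trainer runs on $\theta G_0$ GPUs at degree $m_t$ and the generator on $(1-\theta)G_0$ GPUs at degree $m_g$, \emph{concurrently}, so the step time is the larger of the two stage times, $T_{\textsc{LlamaRL}}(b_t,b_g,m_t,m_g,\theta)=\tfrac{B_0}{G_0}\max\!\bigl(\tfrac{m_t\eta_t(b_t)}{\theta},\tfrac{m_g\eta_g(b_g)}{1-\theta}\bigr)$.

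Second I would read the memory constraints off Table~\ref{mem}. Because the baseline co-locates everything, its binding per-GPU budget is the peak over the step, reached during training where — the KV cache and both model copies being non-free-able, while gradients and activations are freed — it equals $\tfrac{1}{m}(5W_0+A_t b_t+K_g b_g)\le M_0$, with $5W_0$ collecting the trainer model, Adam states, gradients, and generator model, together with $m\le G_0$. \textsc{LlamaRL} instead has two \emph{independent} constraints, $\tfrac{1}{m_t}(4W_0+A_t b_t)\le M_0$ on the trainer GPUs and $\tfrac{1}{m_g}(W_0+K_g b_g)\le M_0$ on the generator GPUs, with $m_t\le\theta G_0$, $m_g\le(1-\theta)G_0$. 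The key arithmetic is that the baseline's per-GPU load splits \emph{exactly} into the two \textsc{LlamaRL} loads, $5W_0+A_t b_t+K_g b_g=(4W_0+A_t b_t)+(W_0+K_g b_g)$, so (ignoring integer rounding of the degrees, which only perturbs things at lower order) the smallest admissible degrees satisfy $m_t+m_g=m^\star$, where $m^\star:=(5W_0+A_t b_t+K_g b_g)/M_0$ is the smallest admissible baseline degree; in particular the feasible set of $(b_t,b_g)$ is the same for both frameworks, namely $5W_0+A_t b_t+K_g b_g\le M_0 G_0$.

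Third I would compare at a common batch pair. Fix any feasible $(b_t,b_g)$. Since all three objectives are increasing in the parallel degrees while the constraints only push those degrees up, the optima take $m=m^\star$, $m_t=(4W_0+A_t b_t)/M_0$, $m_g=(W_0+K_g b_g)/M_0$. Now choose $\theta=m_t/m^\star$ — admissible precisely because $m_t+m_g=m^\star\le G_0$ — which makes $\tfrac{m_t\eta_t(b_t)}{\theta}=m^\star\eta_t(b_t)$ and $\tfrac{m_g\eta_g(b_g)}{1-\theta}=m^\star\eta_g(b_g)$; hence
\[
T_{\textsc{LlamaRL}} \le \frac{B_0 m^\star}{G_0}\max\bigl(\eta_t(b_t),\eta_g(b_g)\bigr) < \frac{B_0 m^\star}{G_0}\bigl(\eta_t(b_t)+\eta_g(b_g)\bigr) = T_{\text{baseline}}(b_t,b_g,m^\star),
\]
the strict inequality because $\eta_t,\eta_g>0$. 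Applying this at the baseline-optimal $(b_t^\star,b_g^\star)$ — attained since the feasible batch sizes are bounded and $\eta_t,\eta_g$ are monotone by Assumption~\ref{ass} — yields $\min T_{\textsc{LlamaRL}}\le T_{\textsc{LlamaRL}}(b_t^\star,b_g^\star,\dots)<\min T_{\text{baseline}}$, which is the theorem. (The per-configuration gap $\tfrac{B_0 m^\star}{G_0}\min(\eta_t,\eta_g)$ is bounded below by a fixed positive constant on the feasible region, so the conclusion persists even if only an infimum is available.)

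The step I expect to be the main obstacle is the second one: correctly extracting the peak-memory expression from Table~\ref{mem} — which components are simultaneously resident during which phase, and exactly how ``free-able'' enters — since the whole argument hinges on the identity $5W_0+A_t b_t+K_g b_g=(4W_0+A_t b_t)+(W_0+K_g b_g)$, i.e.\ on the baseline carrying on every GPU precisely the union of what \textsc{LlamaRL}'s two executor groups carry separately (and on the KV cache genuinely being non-free-able; if it could be dropped during training the baseline's binding load would be a max rather than a sum and the margin could shrink). Secondary technical points are checking that the chosen $\theta$ simultaneously satisfies both GPU-count bounds (it does, reducing to baseline feasibility), handling the integrality of $m,m_t,m_g$ and the divisibility of $G_0$ and $B_0$, and absorbing the memory of any additional reward or reference models into the $W_0$ constants.
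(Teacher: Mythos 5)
Your proposal is correct, and it reaches the conclusion by a more direct route than the paper. You and the paper set up identical objectives and identical memory constraints (the paper's baseline constraint is exactly your sum $\tfrac{(4W_0+A_tb_t)+(W_0+K_gb_g)}{m}\le M_0$, so your worry about which Table~\ref{mem} terms are simultaneously resident is resolved in your favor), and both arguments use the observation that the baseline optimum takes the memory-tight $m$ (the paper's Lemma~\ref{constraint_eq}). The difference is in how the asynchronous side is handled: the paper characterizes the \emph{optimal} \textsc{LlamaRL} configuration via two further lemmas (Lemma~\ref{constraint_eq2} on constraint tightness and Lemma~\ref{decoupled_optim} on decoupled optimality and load balancing, $T_t^{**}/\theta^{**}=T_g^{**}/(1-\theta^{**})$) and then runs the chain of inequalities in \eqref{last}, whereas you simply exhibit one feasible witness: keep the baseline-optimal $(b_t^\star,b_g^\star)$, take the memory-tight $m_t,m_g$ with $m_t+m_g=m^\star$, and set $\theta=m_t/m^\star$, which turns the $\max$ into $m^\star\max(\eta_t,\eta_g)<m^\star(\eta_t+\eta_g)$. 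Your construction sidesteps the two appendix lemmas entirely and even weakens the reliance on Assumption~\ref{ass} (you need it, or a compactness/attainment argument such as your uniform-gap remark, only to pass from the baseline infimum to a point you can compare against); the paper's heavier machinery buys a sharper statement — the \textsc{LlamaRL} \emph{optimum} (which balances $m_t\eta_t+m_g\eta_g\le m^\star\max(\eta_t,\eta_g)$, slightly better than your witness) beats the baseline optimum — plus the structural characterization (memory-tight, load-balanced allocation) that underlies the paper's remarks attributing the speed-up to reduced parallelism and independent per-component tuning. Your housekeeping points (integrality of the degrees, $m\le G_0$, absorbing auxiliary models into constants) match what the paper also brushes aside, e.g.\ in its footnote.
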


\begin{remark}
It is worth noting that the theorem is established under minimal assumptions. In particular, we do not assume any specific analytical form for the batch processing time $\tau$ or the per-sample processing time $\eta$ (see Definition \ref{mini_batch_time}). The only requirement is that $\eta$ decreases monotonically in batch size—a behavior that is both widely reported by prior work in practice and empirically validated in our experiments, as we have discussed earlier.
\end{remark}

\begin{proof}

We begin by computing the step time for each framework. In the synchronous case, the total step time is the sum of the generator and trainer times. In contrast, for the asynchronous framework, the step time is determined by the slower of the two. Using this observation along with Definition~\ref{mini_batch_time}, we can express both step times as follows: 

\begin{eqnarray}
\begin{split}
\label{baseline_steptime}
T_{\text{baseline}}(b_t, b_g, m) &= \frac{B_0m}{G_0b_t}\cdot \tau_t(b_t) + \frac{B_0m}{G_0b_g}\cdot\tau_g(b_g) \\
&= \frac{B_0}{G_0}\cdot m \cdot {(}\eta_t + \eta_g{)}, 
\end{split}
\end{eqnarray}
\begin{eqnarray}
\begin{split}
\label{llamarl_steptime}
T_{\textsc{LlamaRL}}(b_t, b_g, m_t, m_g, \theta) &= \max\left(\frac{B_0m_t}{\theta G_0b_t}\cdot \tau_t, \frac{B_0m_g}{(1-\theta) G_0b_g}\cdot \tau_g\right) \\
&= \frac{B_0}{G_0}\cdot\max\left(\frac{\eta_tm_t}{\theta}, \frac{\eta_gm_g}{1-\theta}\right).
\end{split}
\end{eqnarray}

On the other hand, by Table \ref{mem}, the memory footprint per GPU of the trainer is given by
\begin{eqnarray}
\label{train_foot}
\frac{4W_0}{m_t} + \frac{A_tb_t}{m_t}. 
\end{eqnarray}
Similarly, the generator consumes
\begin{eqnarray}
\label{gen_foot}
\frac{W_0}{m_g} + \frac{K_gb_g}{m_g}. 
\end{eqnarray}

Now combining Eqn. \eqref{baseline_steptime}, \eqref{llamarl_steptime}, \eqref{train_foot} and \eqref{gen_foot}, the admissible minimizer of $T_{\text{baseline}}$ is a solution to the following constrained optimization problem: 

\begin{eqnarray}
\begin{split}
\label{baseline_prob}
&\min_{b_t, b_g, m}\quad \frac{B_0}{G_0}\cdot m \cdot {(}\eta_t(b_t) + \eta_g(b_g){)}\\
&\text{subject to}\quad \frac{(4W_0 + A_tb_t) + (W_0 + K_gb_g)}{m} \le M_0.
\end{split}
\end{eqnarray}

Likewise, using Eqn. \eqref{llamarl_steptime}, \eqref{train_foot} and \eqref{gen_foot}, the admissible minimizer of $T_{\textsc{LlamaRL}}$ solves \footnote{To simplify our argument, we omit the constraint $m \le G_0$. But note this condition is always satisfied by the optimal solutions to \eqref{baseline_prob} and \eqref{llamarl_prob} under sufficient GPU budget $G_0$, which holds in all practical settings we consider.}
\begin{eqnarray}
\begin{split}
\label{llamarl_prob}
&\min_{b_t, b_g, m_t, m_g}\quad \frac{B_0}{G_0}\cdot\max\left(\frac{\eta_t(b_t)m_t}{\theta}, \frac{\eta_g(b_g)m_g}{1-\theta}\right)\\
&\text{subject to}\quad 
\left\{
\begin{array}{l}
\displaystyle \frac{4W_0 + A_tb_t}{m_t}\le M_0, \\
\\
\displaystyle \frac{W_0 + K_gb_g}{m_g} \le M_0.
\end{array}
\right.
\end{split}
\end{eqnarray}

Now, consider a minimizer to problem \eqref{baseline_prob}, again denoted by $(b_t^*, b_g^*, m^*)$, achieving minimium admissible step time $T_{\text{baseline}}^*$. By Lemma \ref{constraint_eq}, $(b_t^*, b_g^*, m^*)$ satisfies \eqref{constraint_eq_id}, thus the following strict inequalities hold
\begin{eqnarray}
\begin{split}
\label{strict_ineq}
\left\{
\begin{array}{l}
\displaystyle \frac{4W_0 + A_tb_t^*}{m^*} < M_0, \\
\\
\displaystyle \frac{W_0 + K_gb_g^*}{m^*} < M_0.
\end{array}
\right.
\end{split}
\end{eqnarray}
Conversely, consider a solution to problem \eqref{llamarl_prob}, denoted by $(b_t^{**}, b_g^{**}, m_t^{**}, m_g^{**}, \theta^{**})$. By Lemma \ref{constraint_eq2}, it must satisfy the following constraints
\begin{eqnarray}
\begin{split}
\label{optim_m}
\left\{
\begin{array}{l}
\displaystyle m_t^{**} = \frac{(4W_0 + A_tb_t^{**})}{M_0}, \\
\\
\displaystyle m_g^{**} = \frac{(W_0 + K_gb_g^{**})}{M_0}.
\end{array}
\right.
\end{split}
\end{eqnarray}
Next, define the variables $T_t^{**}$ and $T_g^{**}$ for the optimal trainer and generator time 
\begin{eqnarray}
\begin{split}
\label{decoupled_sols}
\left\{
\begin{array}{l}
\displaystyle T^{**}_t \eqqcolon \min_{b_t}\frac{(4W_0 + A_tb_t)\eta_t}{M_0},\\
\\
\displaystyle T^{**}_g \eqqcolon \min_{b_g}\frac{(W_0 + K_gb_g)\eta_g}{M_0}.
\end{array}
\right.
\end{split}
\end{eqnarray}

Then we obtain 
\begin{eqnarray}
\begin{split}
\label{last}
T_{\text{baseline}}^* &= \frac{B_0}{G_0}\cdot \frac{(4W_0 + A_tb_t^*) + (W_0 + K_gb_g^*)}{M_0}\cdot (\eta_t^* + \eta_g^*) \\
&>_{(a)} \frac{B_0}{G_0} \cdot \bigg{(}\frac{(4W_0 + A_tb_t^*)\eta_t^*}{M_0} + \frac{(W_0 + K_gb_g^*)\eta_g^*}{M_0}\bigg{)}\\
&>_{(b)} \frac{B_0}{G_0} \cdot \bigg{(} T^{**}_t + T^{**}_g\bigg{)}\\
& = \frac{B_0}{G_0} \cdot \bigg{(} \theta^{**}\cdot \frac{T^{**}_t}{\theta^{**}} + (1 - \theta^{**})\cdot \frac{T^{**}_g}{1 - \theta^{**}}\bigg{)}\\
& =_{(c)} \frac{B_0}{G_0} \cdot \max\left(\frac{T^{**}_t}{\theta^{**}}, \frac{T^{**}_g}{1 -\theta^{**}}\right) \\
& =_{(d)} \frac{B_0}{G_0} \cdot \max\left(\frac{\eta^{**}_tm^{**}_t}{\theta^{**}}, \frac{\eta^{**}_gm_g^{**}}{1 -\theta^{**}}\right) \\
& = T_{\textsc{LlamaRL}}\left(b_t^{**}, b_g^{**}, m_t^{**}, m_g^{**}, \theta^{**}\right)
\end{split}
\end{eqnarray}
where inequality (a) follows from a direct algebra manipulation, inequality (b) follows from definition \eqref{decoupled_sols}, equality (c) is the  consequence of the third identity in \eqref{decoupled_property} of Lemma \ref{decoupled_optim}, equality (d) uses the first two identities in \eqref{decoupled_property} of Lemma \ref{decoupled_optim}, and the last equality is yielded by definition \eqref{llamarl_steptime}. Combining both sides of \eqref{last}, we conclude Theorem \ref{main_theorem}.
\end{proof}

\begin{remark}
    The final derivation in ~\eqref{last} helps identify two primary sources of efficiency gains. First, inequality (a) reflects how the freed-up GPU memory in the asynchronous setup allows for reduced model parallelism, which in turn lowers computation load on each GPU. Then, inequality (b) suggests that by decoupling trainer and generator memory constraints, the asynchronous framework can further optimize among \texttt{mp\_size} and micro batch size within each component independently to maximize overall step efficiency.
\end{remark}
\begin{remark}
    To simplify the proof of Theorem \ref{main_theorem}, we intentionally made several conservative assumptions. For example, we did not account for the potential use of quantization to reduce model size, which would further enable smaller \texttt{mp\_size}, nor did we argue for the additional communication savings associated with reduced \texttt{mp\_size}. Importantly, each of these omitted factors would only strengthen our conclusion by contributing to even greater efficiency gains.
\end{remark}

\section{Experiments}
\label{sec:exp}

We showcase the efficiency of \textsc{LlamaRL} with a number of  ablations.

\subsection{Experiment setup}
\label{exp_setup}
We empirically demonstrate the efficiency gains of \textsc{LlamaRL}, and show that it achieves comparable performance to the synchronous on-policy baseline on public benchmarks. Unless stated otherwise, all models are trained on the MATH dataset \citep{Hendrycks2021MeasuringMP}, a mathematical reasoning dataset that contains questions paired with short-form answers. Evaluations are conducted on the full test split of the MATH dataset \citep{Hendrycks2021MeasuringMP} as well as the GSM8K dataset \citep{Cobbe2021TrainingVT}. To further ensure fair evaluation and mitigate potential score inflation due to test set contamination in MATH, we follow  \citet{Lightman2023LetsVS} and additionally evaluate on a 500-problem held-out subset, commonly referred to as MATH-500.

\paragraph{\textbf{\textsc{LlamaRL} vs. Baseline.}}
\label{baseline_explanations}
To benchmark against \textsc{LlamaRL}, we adopt the commonly recognized synchronous on-policy baseline (see, e.g., DeepSpeed-Chat \citep{yao2023deepspeedchateasyfastaffordable}). Both \textsc{LlamaRL} and the baseline share identical optimizations for inference and training. Specifically, the inference optimization includes efficient memory management for KV-cache, optimized CUDA kernels, CUDA graph utilization \citep{nguyen2021cudagraphs}, and tensor parallelism \citep{shoeybi2019megatron}. The training side applies FSDP~\citep{zhao2023pytorchfsdpexperiencesscaling} to optimize memory usage. 

To reiterate, the key distinction between the asynchronous \textsc{LlamaRL} and the synchronous baseline, lies in their execution architecture: 
\begin{itemize}
    \item The baseline colocates inference and policy models within the same GPU cluster sharing the same model parallelism. In comparison, \textsc{LlamaRL} decouples these components to different GPU clusters with different parallelism and data precision. For example in Table~\ref{eff_results}, The 405B \textsc{LlamaRL} job can be incoorperated with generator with model parallelism size 8 to model parallelism size 32.
    \item The baseline execute the training and inference sequentially. In comparison, \textsc{LlamaRL} enables asynchronous training and inference with DDMA weights update across different components.
\end{itemize}

\paragraph{\textbf{Experimental configurations.}}
We conduct experiments on the 8B, 70B, and 405B variants of LLaMA 3.1 family of open sourced models \citep{Dubey2024TheL3}. For optimization, we use a fixed learning rate of $2 \times 10^{-7}$ with the Adam optimizer \citep{kingma2014adam,loshchilov2017decoupled}. For a fair comparison, both frameworks use identical RL algorithms, datasets, training hyperparameters, evaluation protocols, and the same number of H100 GPUs. We will specify all training details, including number of generations,  number of GPUs, model tensor parallelism, and batch sizes in experiments below. 

\subsection{Efficiency comparison}

\begin{table}[ht]
\centering
\small
\setlength\tabcolsep{3pt}
\begin{tabular}{ccccccccc}
\toprule
\multicolumn{1}{c|}{Model size (B)}  & \multicolumn{1}{c|}{\begin{tabular}[c]{@{}c@{}} Total step \\ time(s)\end{tabular}}            & \begin{tabular}[c]{@{}c@{}}Total\\ \# GPUs\end{tabular} & \begin{tabular}[c]{@{}c@{}} Generator \\ \# GPUs\end{tabular} & \multicolumn{1}{c}{\begin{tabular}[c]{@{}c@{}} Trainer \\ \# GPUs\end{tabular}} & \begin{tabular}[c]{@{}c@{}} Trainer \\ mp size\end{tabular} & \begin{tabular}[c]{@{}c@{}} Generator \\ mp size\end{tabular} &  
\begin{tabular}[c]{@{}c@{}}The global \\ batch size\end{tabular} & \multicolumn{1}{c}{\begin{tabular}[c]{@{}c@{}}Max decode\\  concurrency\end{tabular}} \\ \midrule
\multicolumn{9}{c}{\cellcolor[HTML]{EFEFEF}\bfseries{Baseline}}                                                                                                                                                                                                                                                                                                           \\ \midrule
\multicolumn{1}{c|}{8}    & \multicolumn{1}{c|}{22.45}  & {\color[HTML]{000000} 256}      & N/A   & N/A    & {\color[HTML]{000000} {8}}
& {\color[HTML]{000000} {8}}                                      & 2048                                                            & 16 x 16 (dp size)                                                                                 \\
\multicolumn{1}{c|}{70}    & \multicolumn{1}{c|}{82.32}   & {\color[HTML]{000000} 256}          & N/A   & N/A    & {\color[HTML]{000000} {8}} & {\color[HTML]{000000} {8}}                                                                        & 2048                                                            & 16 x 16                                                                                 \\
\multicolumn{1}{c|}{405}   & \multicolumn{1}{c|}{635.8}         & 1024  & N/A     & {\color[HTML]{000000} N/A} 
& {\color[HTML]{000000} 64} & {\color[HTML]{000000} 64}                                       & 2048                                                             & 32 x 16                                                                                   \\ \midrule
\multicolumn{9}{c}{\cellcolor[HTML]{EFEFEF}{\textsc{\bfseries{LlamaRL}}}}                                                                                                                                                                                                                                                                                                         \\ \midrule
\multicolumn{1}{c|}{8}    & \multicolumn{1}{c|}{\bfseries{12.22}}   & {\color[HTML]{000000} 256}         & 128   & {128}    & {\color[HTML]{000000} {8}}  & {\color[HTML]{000000} {8}}                                                                        & 2048                                                            & 64 x 16                                                                                 \\
\multicolumn{1}{c|}{8}    & \multicolumn{1}{c|}{\bfseries{8.90}}   & {\color[HTML]{000000} 256}    & 128   & {128}    & {\color[HTML]{000000} {8}}  & {\color[HTML]{000000} {1}}                                                                             & 2048                                                            & 32 x 128                                                                                 \\
\multicolumn{1}{c|}{70} & \multicolumn{1}{c|}{\bfseries{26.19}} & {\color[HTML]{000000} 256}  & 128   & {128}     & {\color[HTML]{000000} 8}  & {\color[HTML]{000000} 8}                                                                             & 2048                                                              & 64 x 16                                                                                                                                                                \\
\multicolumn{1}{c|}{70}    & \multicolumn{1}{c|}{\bfseries{20.67}}    & {\color[HTML]{000000} 256}    & 120   & 136    & {\color[HTML]{000000} {8}} & {\color[HTML]{000000} {4 (\texttt{fp8})}} & 2048 & 16 x 34  \\

\multicolumn{1}{c|}{405} & \multicolumn{1}{c|}{\bfseries{240.8}} & {\color[HTML]{000000} 1024} & {512}   & {512}     & {\color[HTML]{000000} 32} & {\color[HTML]{000000} 32}                                                                            & 2048                                                              & 32 x 16                            \\
\multicolumn{1}{c|}{405} & \multicolumn{1}{c|}{\bfseries{100.5}}  & {\color[HTML]{000000} 1024} & {512}   & {512}     & {16} & {\color[HTML]{000000} 16}                                                                            & 2048                                                              & 48 x 32                           \\
\multicolumn{1}{c|}{405} & \multicolumn{1}{c|}{\bfseries{59.5}}        & {\color[HTML]{000000} 1024} & {512}   & {512}     & {\color[HTML]{000000} 16} & {\color[HTML]{000000} 8 (\texttt{fp8})}                                                                      & 2048                                                              & 32 x 64                                                                                 \\ \bottomrule
\end{tabular}

\vspace{0.3em}
\caption{\small{RL speed comparison between \textsc{LlamaRL} and synchronized baseline. To ensure a fair comparison, all experiments use $n = 4$ completions per training prompt and $512$ unique prompts per training batch, yielding a consistent global batch size of 2048. We use 256 GPUs for both the 8B model and 70B model, and 1024 GPUs for the 405B model. For the baseline, both the generator and trainer are colocated on all GPUs, so their individual \# GPU is not listed.  For the max decode concurrency, we calculate it by multiplying the max decode concurrency per data parallel group with data parallel group size. }}
\label{eff_results}
\end{table}

\begin{table}[ht]
\centering
\small
\setlength\tabcolsep{3pt}
\begin{tabular}{c | c | c}
\toprule
Model Size (B) & 
OpenRLHF & 
\textsc{LlamaRL} \\
\midrule
7    & 4.32   & 0.04  \\
70   & 111.65 & 1.15   \\
405  &  -     & 2.31   \\
\bottomrule
\end{tabular}
\caption{The time consumption (seconds) of the weights synchronization in OpenRLHF and \textsc{LlamaRL}.}
\end{table}

Table~\ref{eff_results} summarizes the efficiency results. Under equal GPU budgets, \textsc{LlamaRL} achieves $2.52\times$, $3.98\times$ and $10.7\times$ speedups per RL step on the 8B, 70B and 405B models, respectively, demonstrating strong scalability with increasing model size.

While the largest gain at 405B partly benefits from \texttt{fp8} quantization on the generator, we emphasize that the primary source of improvement lies in \textsc{LlamaRL}'s parallelism flexibility. Specifically, it allows the generator and trainer to use different model parallel sizes. This enables us to set \texttt{mp\_size} = 8 for the generator while keeping trainer as bf16 at \texttt{mp\_size} = 16, effectively balancing generation and training time to maximize overall throughput. In contrast, the synchronous baseline (see Subsection~\ref{baseline_explanations}) requires a single global \texttt{mp\_size} for both generator and trainer. As a result, even if the baseline generator were quantized, it would still be constrained to \texttt{mp\_size} = 64, limiting the potential speedup.

We define the efficiency gain of \textsc{LlamaRL} as the ratio of RL step times between the synchronous baseline and \textsc{LlamaRL} under identical hardware and training settings. This gain—measured at 8B, 70B, and 405B model scales—is plotted against the logarithm of model size in Figure~\ref{figure:eff_scaling}. The convex trend indicates that the relative speedup grows super-linearly in log-scale as models increase in size. Consequently, \textsc{LlamaRL} demonstrates strong scalability and offers promising efficiency for training even larger language models in the future.

\begin{figure}[h]
\centering
\includegraphics[width=\textwidth, height=4.5cm]{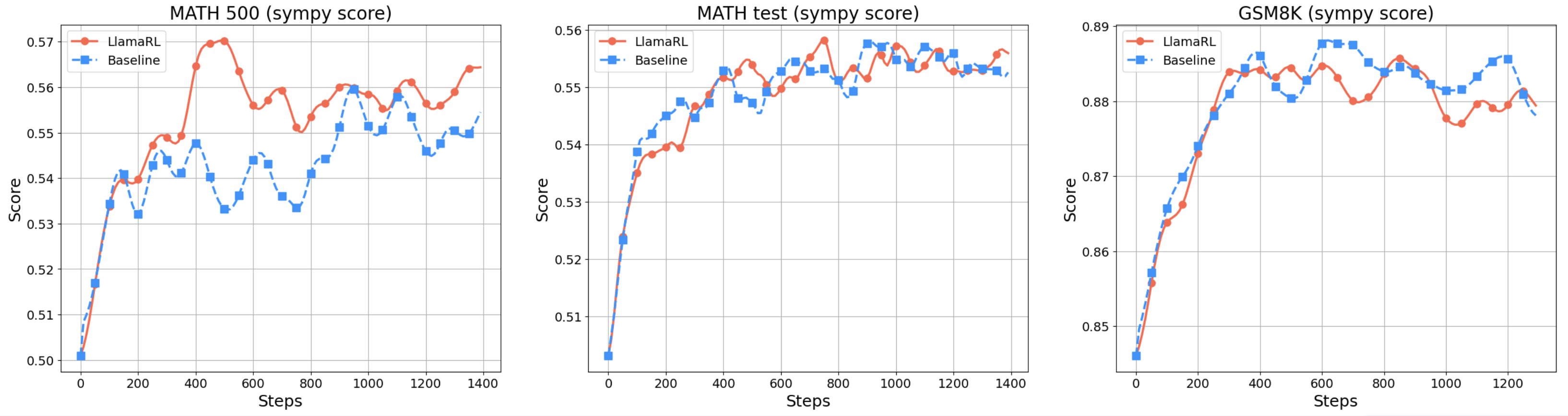}
\caption{\small{Evaluation results comparing \textsc{LlamaRL} with the synchronous RL baseline on MATH-500, MATH test, and GSM8K for 8B models. \textsc{LlamaRL} achieves comparable performance to the baseline across all benchmarks, while benefiting from a more efficient asynchronous training framework.}}
\label{figure:quality}
\end{figure}

\begin{figure}[ht]
\centering
\begin{minipage}[t]{0.33\textwidth}
\centering
\includegraphics[width=\textwidth, height=4.4cm]{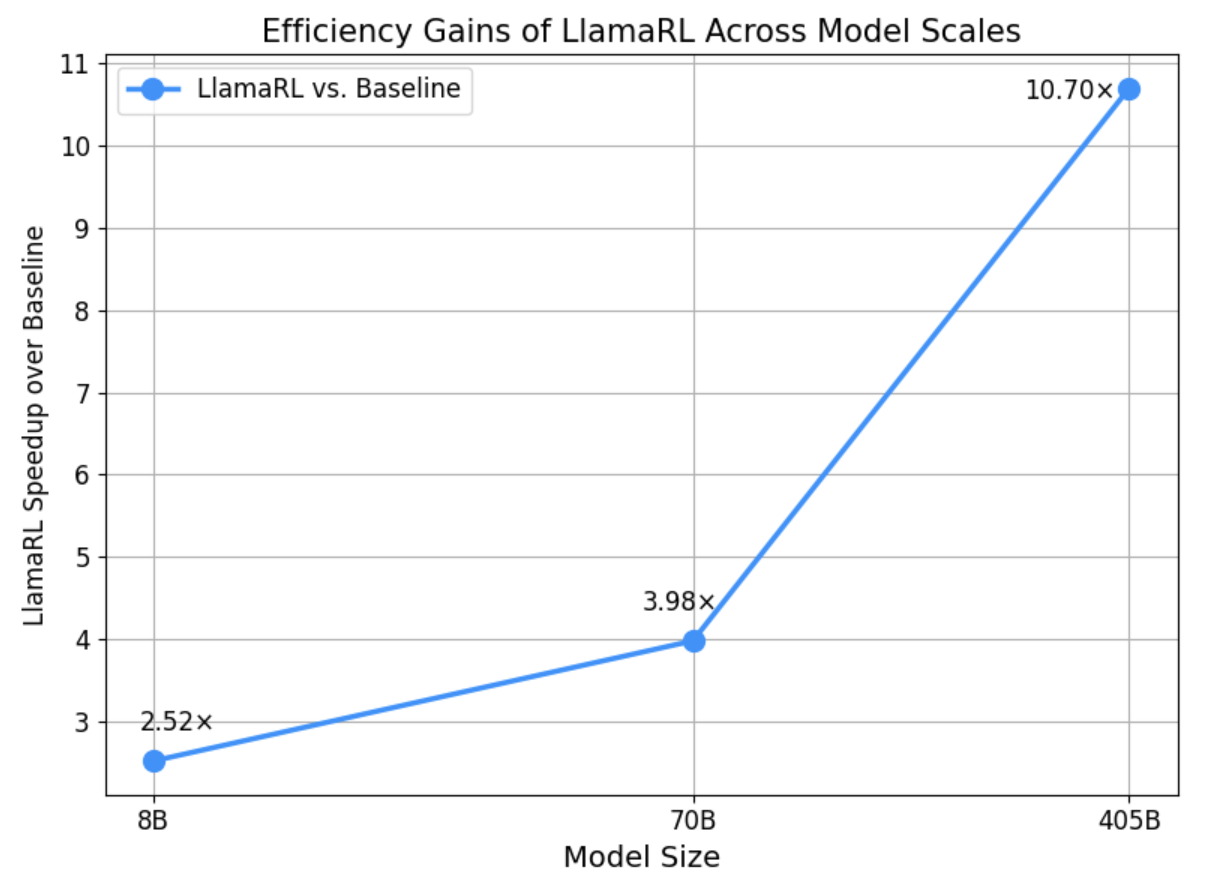}
\caption{\small{Efficiency gains of \textsc{LlamaRL} over the synchronous baseline at different model scales. Speedup increases with model size, reaching over 10× at 405B. We use a log-scaled x-axis to better visualize the growth trend.}}
\label{figure:eff_scaling}
\end{minipage}
\hfill
\begin{minipage}[t]{0.66\textwidth}
\centering
\includegraphics[width=\textwidth, height=4.4cm]{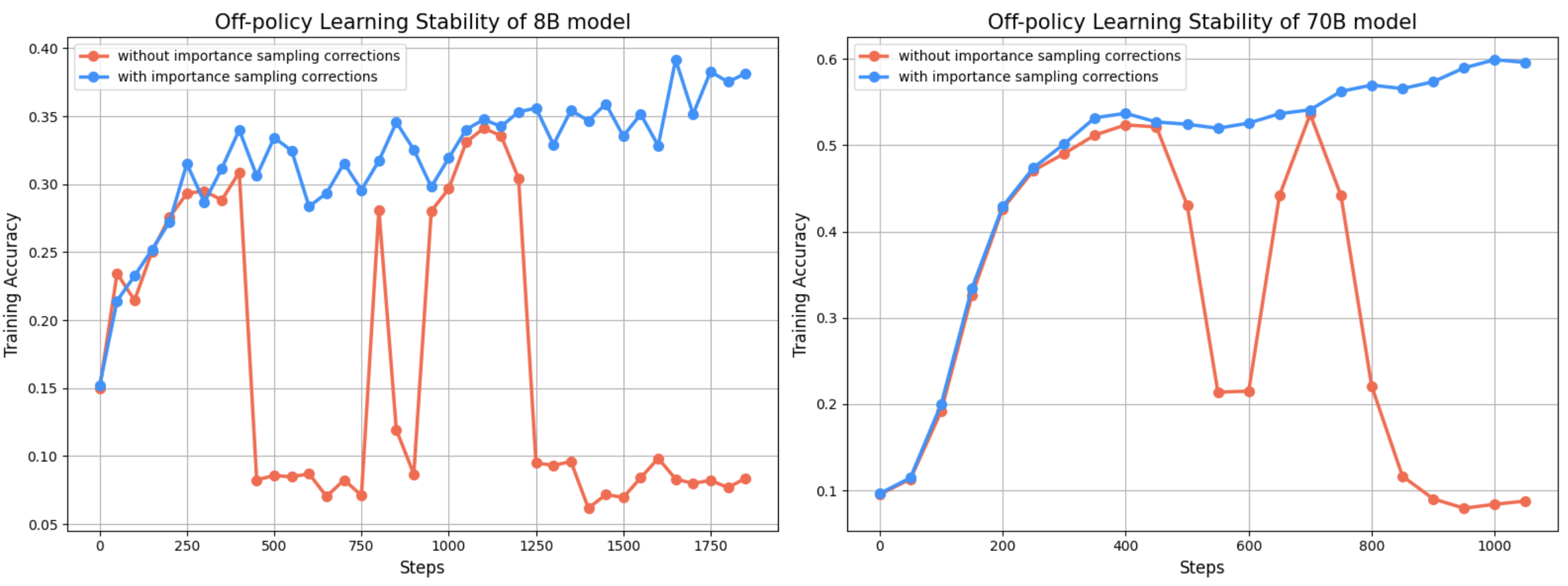}
\caption{\small{Ablation on the impact of off-policy corrections. We compare training stability with and without importance sampling corrections for asynchronous training in \textsc{LlamaRL}. With more sophisticated data mixtures, 8B and 70B models' training tend to be sporadically unstable. In contrast, applying off-policy corrections stabilizes training across model scales, demonstrating their necessity for ensuring robustness in asynchronous RL training framework.}}
\label{figure:off-policy-corrections}
\end{minipage}
\end{figure}

\subsection{Quality comparison to on-policy learning}
To assess the training quality under different RL training frameworks, we evaluate on the MATH test set, MATH-500 (see Subsection~\ref{exp_setup}), and GSM8K, using the \textit{sympy score} as our primary metric. This score measures answer accuracy by parsing both predicted and reference expressions with the \texttt{sympy} symbolic algebra library and checking for symbolic equivalence \citep{10.7717/peerj-cs.103}. We use the same score as reward for training applied to the training data.

We train 8B models using the \textsc{LlamaRL} framework with the \texttt{AIPO} algorithm. As a strong and fair comparison, we use the synchronous RL baseline described earlier in Subsection~\ref{baseline_explanations}. Algorithmically, the baseline can be understood as an on-policy version of \texttt{AIPO} - the same gradient estimator but without importance sampling corrections, since the baseline is already strictly on-policy. All hyper-parameters, including the global batch size and learning rate, are kept identical across both settings.

Figure~\ref{figure:quality} presents the evaluation results on MATH-500, the full MATH test set, and GSM8K. On both the MATH test set and GSM8K, \textsc{LlamaRL} performs on par with the traditional synchronous RL framework, demonstrating that the asynchronous training does not compromise training quality. Interestingly, \textsc{LlamaRL} shows a consistent improvement on MATH-500 throughout training. While this gain may be partially attributed to statistical variation, it may also reflect potential benefits of off-policy learning over strictly on-policy approaches (see, e.g.,  \citet{Lambert2024TLU3P} on this point). We leave a deeper investigation of this observation to future work.

\paragraph{\textbf{Remark.}} We focus on the 8B model for evaluation, as the 70B model does not obtain significant improvements by only training on the MATH training set. The 8B model evaluation allows us to observe clearer learning trends and more effectively compare training quality between RL frameworks.

\section{Conclusion}
In this paper, we introduced \textsc{LlamaRL}, a fully distributed and asynchronous reinforcement learning framework for large-scale training of large language models (LLMs). By combining asynchronous training with diverse parallelization strategies and mixed-precision techniques across various RL components, \textsc{LlamaRL} dramatically improves computational and memory efficiency. Our proposed Distributed Direct Memory Access (DDMA) method further ensures efficient weight updates among actors. Consequently, \textsc{LlamaRL} exhibits robust scalability—from a small cluster of H100 GPUs to thousands or even more H100 GPUs—providing a practical path forward as model sizes and training demands continue to grow. 

We establish the efficiency gains of \textsc{LlamaRL} both theoretically and empirically. From a theoretical perspective, we provide a formal analysis and prove that \textsc{LlamaRL} achieves strictly lower step time than any synchronous RL framework under the same resource constraints. Our experiments indicate that off-policy corrections effectively address potential drawbacks of asynchronous learning, preserving both training stability and final performance. In particular, \textsc{LlamaRL} achieves a 10.7× increase in RL training throughput on extremely large (405B) models. 

Overall, \textsc{LlamaRL} marks the next step in RL-based fine-tuning for LLMs, simplifying the integration of complex  algorithms involing potentially multiple models at scale. We anticipate that our approach will benefit both researchers and practitioners by delivering greater efficiency, enhanced reliability, and easier extensibility in RL pipelines for modern AI workloads. Future directions include exploring more advanced off-policy learning methods, integrating multi-task objectives, and extending asynchronous RL to broader modalities.

\newpage
\section*{Contributors and Acknowledgement}

\subsection*{Co-first authors}
Bo Wu, Sid Wang, Yunhao Tang, Jia Ding

\subsection*{Core contributors}
Eryk Helenowski, Liang Tan, Tengyu Xu, Tushar Gowda, Zhengxing Chen (in alphabetical order of the first name)

\subsection*{Other contributors}
Chen Zhu, Xiaocheng Tang, Yundi Qian (in alphabetical order of the first name)

\subsection*{Project Leads}
Beibei Zhu, Rui Hou (in alphabetical order of the first name)

\subsection*{Acknowledgement}
Thanks to the following members from the Llama team who continually work on top of our initial \textsc{LlamaRL} stack, contributing useful features and providing valuable feedback: Qian Sun, Jessica Zhong, Boduo Li, Jemma Fan, Ning Li, Zhitong Guo, Hongbo Zhang, Haiping Zhao, Naman Goyal, Dhruv Mahajan, Rohan Anil, Kushal Lakhotia, Ruan Silva, Sagar Jain, Licheng Yu, Yuanhao Xiong, Karthik Prasad, Di Jin, Congying Xia, Rohan Maheshwari, Jinxi Guo, Xiaopeng Li, Damien Allonsius, David Brandfonbrener, Xuewei Wang, Lovish Madaan, Anirudh Goyal, Andy Su, Arvind Neelakantan, Yuandong Tian. Thanks to the members from Meta FAIR for discussions: R\'emi Munos, Taco Cohen, Kunhao Zheng. Thanks to the following members from the Meta infrastructure team to provide strong underlying infrastructure and seamless support: Weiwei Chu, Sheng Shen, Naman Goyal, Xiaodong Wang, Jianyu Huang, Jiecao Yu, Jenya Lee, Rohan Verma, Xinfeng Xie.

\clearpage
\newpage
\bibliographystyle{apalike}
\bibliography{paper}

\begin{thebibliography}{}

\bibitem[Ahmadian et~al., 2024]{ahmadian2024back}
Ahmadian, A., Cremer, C., Gall{\'e}, M., Fadaee, M., Kreutzer, J., Pietquin,
  O., {\"U}st{\"u}n, A., and Hooker, S. (2024).
\newblock Back to basics: Revisiting reinforce style optimization for learning
  from human feedback in llms.
\newblock {\em arXiv preprint arXiv:2402.14740}.

\bibitem[Anthropic, 2023]{Anthropic2023Claude}
Anthropic (2023).
\newblock Claude: Training helpful and harmless ai assistants.
\newblock {\em Anthropic AI Blog}.
\newblock URL: \url{https://www.anthropic.com/index/2023/Claude}.

\bibitem[Bai et~al., 2022a]{Bai2022Training}
Bai, Y., Jones, A., Ndousse, K., and et~al. (2022a).
\newblock Training a helpful and harmless assistant with rlhf.
\newblock {\em arXiv preprint arXiv:2204.05862}.

\bibitem[Bai et~al., 2022b]{Bai2022ConstitutionalAI}
Bai, Y., Jones, A., Ndousse, K., and et~al. (2022b).
\newblock Training a helpful and harmless assistant with rlhf.
\newblock {\em arXiv preprint arXiv:2204.05862}.

\bibitem[Berner et~al., 2019]{berner2019dota}
Berner, C., Brockman, G., Chan, B., Cheung, V., D{\k{e}}biak, P., Dennison, C.,
  Farhi, D., Fischer, Q., Hashme, S., Hesse, C., et~al. (2019).
\newblock Dota 2 with large scale deep reinforcement learning.
\newblock {\em arXiv preprint arXiv:1912.06680}.

\bibitem[Brown et~al., 2020]{brown2020language}
Brown, T., Mann, B., Ryder, N., Subbiah, M., Kaplan, J.~D., Dhariwal, P.,
  Neelakantan, A., Shyam, P., Sastry, G., Askell, A., et~al. (2020).
\newblock Language models are few-shot learners.
\newblock {\em Advances in neural information processing systems},
  33:1877--1901.

\bibitem[Chilimbi et~al., 2014]{chilimbi2014adam}
Chilimbi, T., Suzue, Y., Apacible, J., and Kalyanaraman, K. (2014).
\newblock {Project Adam}: Building an efficient and scalable deep learning
  training system.
\newblock In {\em Proceedings of the 11th USENIX Symposium on Operating Systems
  Design and Implementation (OSDI)}, pages 571--582.

\bibitem[Christiano et~al., 2017]{Christiano2017}
Christiano, P., Leike, J., Brown, T., and et~al. (2017).
\newblock Deep reinforcement learning from human preferences.
\newblock In {\em Advances in Neural Information Processing Systems (NeurIPS)}.

\bibitem[Cobbe et~al., 2021]{Cobbe2021TrainingVT}
Cobbe, K., Kosaraju, V., Bavarian, M., Chen, M., Jun, H., Kaiser, L., Plappert,
  M., Tworek, J., Hilton, J., Nakano, R., Hesse, C., and Schulman, J. (2021).
\newblock Training verifiers to solve math word problems.
\newblock {\em ArXiv}, abs/2110.14168.

\bibitem[DeepSeek-AI, 2025]{deepseek2025r1}
DeepSeek-AI (2025).
\newblock Deepseek-r1: Incentivizing reasoning capability in llms via
  reinforcement learning.
\newblock {\em arXiv preprint arXiv:2501.12948}.

\bibitem[Dong et~al., 2023]{dong2023raft}
Dong, H., Xiong, W., Goyal, D., Zhang, Y., Chow, W., Pan, R., Diao, S., Zhang,
  J., Shum, K., and Zhang, T. (2023).
\newblock Raft: Reward ranked finetuning for generative foundation model
  alignment.
\newblock {\em arXiv preprint arXiv:2304.06767}.

\bibitem[Dubey et~al., 2024]{Dubey2024TheL3}
Dubey, A., Jauhri, A., Pandey, A., Kadian, A., Al-Dahle, A., Letman, A.,
  Mathur, A., Schelten, A., Yang, A., Fan, A., Goyal, A., Hartshorn, A.~S.,
  Yang, A., Mitra, A., Sravankumar, A., Korenev, A., Hinsvark, A., Rao, A.,
  Zhang, A., Rodriguez, A., Gregerson, A., Spataru, A., tiste Rozi{\`e}re, B.,
  Biron, B., Tang, B., Chern, B., Caucheteux, C., Nayak, C., Bi, C., Marra, C.,
  McConnell, C., Keller, C., Touret, C., Wu, C., Wong, C., Ferrer, C.~C.,
  Nikolaidis, C., Allonsius, D., Song, D., Pintz, D., Livshits, D., Esiobu, D.,
  Choudhary, D., Mahajan, D., Garcia-Olano, D., Perino, D., Hupkes, D.,
  Lakomkin, E., AlBadawy, E.~A., Lobanova, E., Dinan, E., Smith, E.~M.,
  Radenovic, F., Zhang, F., Synnaeve, G., Lee, G., Anderson, G.~L., Nail, G.,
  Mialon, G., Pang, G., Cucurell, G., Nguyen, H., Korevaar, H., Xu, H.,
  Touvron, H., Zarov, I., Ibarra, I.~A., Kloumann, I.~M., Misra, I., Evtimov,
  I., Copet, J., Lee, J., Geffert, J., Vranes, J., Park, J., Mahadeokar, J.,
  Shah, J., van~der Linde, J., Billock, J., Hong, J., Lee, J., Fu, J., Chi, J.,
  Huang, J., Liu, J., Wang, J., Yu, J., Bitton, J., Spisak, J., Park, J.,
  Rocca, J., Johnstun, J., Saxe, J., Jia, J.-Q., Alwala, K.~V., Upasani, K.,
  Plawiak, K., Li, K., neth Heafield, K.-., Stone, K., El-Arini, K., Iyer, K.,
  Malik, K., ley Chiu, K., Bhalla, K., Rantala-Yeary, L., van~der Maaten, L.,
  Chen, L., Tan, L., Jenkins, L., Martin, L., Madaan, L., Malo, L., Blecher,
  L., Landzaat, L., de~Oliveira, L., Muzzi, M., Pasupuleti, M., Singh, M.,
  Paluri, M., Kardas, M., Oldham, M., Rita, M., Pavlova, M., Kambadur, M.
  H.~M., Lewis, M., Si, M., Singh, M.~K., Hassan, M., Goyal, N., Torabi, N.,
  Bashlykov, N., Bogoychev, N., Chatterji, N.~S., Duchenne, O., cCelebi, O.,
  Alrassy, P., Zhang, P., Li, P., Vasi{\'c}, P., Weng, P., Bhargava, P., Dubal,
  P., Krishnan, P., Koura, P.~S., Xu, P., He, Q., Dong, Q., Srinivasan, R.,
  Ganapathy, R., Calderer, R., Cabral, R.~S., Stojnic, R., Raileanu, R.,
  Girdhar, R., Patel, R., main Sauvestre, R., Polidoro, R., Sumbaly, R.,
  Taylor, R., Silva, R., Hou, R., Wang, R., Hosseini, S., Chennabasappa, S.,
  Singh, S., Bell, S., Kim, S.~S., Edunov, S., Nie, S., Narang, S., Raparthy,
  S.~C., Shen, S., Wan, S., Bhosale, S., Zhang, S., Vandenhende, S., Batra, S.,
  Whitman, S., Sootla, S., Collot, S., Gururangan, S., Borodinsky, S., Herman,
  T., Fowler, T., Sheasha, T., Georgiou, T., Scialom, T., Speckbacher, T.,
  Mihaylov, T., Xiao, T., Karn, U., Goswami, V., Gupta, V., Ramanathan, V.,
  Kerkez, V., Gonguet, V., Do, V., Vogeti, V., Petrovic, V., Chu, W., Xiong,
  W., Fu, W., ney Meers, W., Martinet, X., Wang, X., Tan, X.~E., Xie, X., Jia,
  X., Wang, X., Goldschlag, Y., Gaur, Y., Babaei, Y., Wen, Y., Song, Y., Zhang,
  Y., Li, Y., Mao, Y., Coudert, Z.~D., Yan, Z., Chen, Z., Papakipos, Z., Singh,
  A.~K., Grattafiori, A., Jain, A., Kelsey, A., Shajnfeld, A., Gangidi, A.,
  Victoria, A., Goldstand, A., Menon, A., Sharma, A., Boesenberg, A., Vaughan,
  A., Baevski, A., Feinstein, A., Kallet, A., Sangani, A., Yunus, A., Lupu, A.,
  Alvarado, A., Caples, A., Gu, A., Ho, A., Poulton, A., Ryan, A., Ramchandani,
  A., Franco, A., Saraf, A., Chowdhury, A., Gabriel, A., Bharambe, A.,
  Eisenman, A., Yazdan, A., James, B., Maurer, B., Leonhardi, B., Huang,
  P.-Y.~B., Loyd, B., Paola, B.~D., Paranjape, B., Liu, B., Wu, B., Ni, B.,
  Hancock, B., Wasti, B., Spence, B., Stojkovic, B., Gamido, B., Montalvo, B.,
  Parker, C., Burton, C., Mejia, C., Wang, C., Kim, C., Zhou, C., Hu, C., Chu,
  C.-H., Cai, C., Tindal, C., Feichtenhofer, C., Civin, D., Beaty, D., Kreymer,
  D., Li, S.-W., Wyatt, D., Adkins, D., Xu, D., Testuggine, D., David, D.,
  Parikh, D., Liskovich, D., Foss, D., Wang, D., Le, D., Holland, D., Dowling,
  E., Jamil, E., Montgomery, E., Presani, E., Hahn, E., Wood, E., Brinkman, E.,
  Arcaute, E., Dunbar, E., Smothers, E., Sun, F., Kreuk, F., Tian, F., Ozgenel,
  F., Caggioni, F., Guzm'an, F., Kanayet, F.~J., Seide, F., Florez, G.~M.,
  Schwarz, G., Badeer, G., Swee, G., Halpern, G., Thattai, G., Herman, G.,
  Sizov, G.~G., Zhang, G., Lakshminarayanan, G., Shojanazeri, H., Zou, H.,
  Wang, H., Zha, H., Habeeb, H., Rudolph, H., Suk, H., Aspegren, H., Goldman,
  H., Molybog, I., Tufanov, I., Veliche, I.-E., Gat, I., Weissman, J., Geboski,
  J., Kohli, J., Asher, J., Gaya, J.-B., Marcus, J., Tang, J., Chan, J., Zhen,
  J., Reizenstein, J., Teboul, J., Zhong, J., Jin, J., Yang, J., Cummings, J.,
  Carvill, J., Shepard, J., McPhie, J., Torres, J., Ginsburg, J., Wang, J., Wu,
  K., KamHou, U., Saxena, K., Prasad, K., Khandelwal, K., Zand, K., Matosich,
  K., Veeraraghavan, K., Michelena, K., Li, K., Huang, K., Chawla, K.,
  Lakhotia, K., Huang, K., Chen, L., Garg, L., Lavender, A., Silva, L., Bell,
  L., Zhang, L., Guo, L., Yu, L., Moshkovich, L., Wehrstedt, L., Khabsa, M.,
  Avalani, M., Bhatt, M., Tsimpoukelli, M., Mankus, M., Hasson, M., Lennie, M.,
  Reso, M., Groshev, M., Naumov, M., Lathi, M., Keneally, M., Seltzer, M.~L.,
  Valko, M., Restrepo, M., Patel, M., Vyatskov, M., Samvelyan, M., Clark, M.,
  Macey, M., Wang, M., Hermoso, M.~J., Metanat, M., Rastegari, M., Bansal, M.,
  Santhanam, N., Parks, N., White, N., Bawa, N., Singhal, N., Egebo, N.,
  Usunier, N., Laptev, N.~P., Dong, N., Zhang, N., Cheng, N., Chernoguz, O.,
  Hart, O., Salpekar, O., Kalinli, O., Kent, P., Parekh, P., Saab, P., Balaji,
  P., Rittner, P., Bontrager, P., Roux, P., Doll{\'a}r, P., Zvyagina, P.,
  Ratanchandani, P., Yuvraj, P., Liang, Q., Alao, R., Rodriguez, R., Ayub, R.,
  Murthy, R., Nayani, R., Mitra, R., Li, R., Hogan, R., Battey, R., Wang, R.,
  Maheswari, R., Howes, R., Rinott, R., Bondu, S.~J., Datta, S., Chugh, S.,
  Hunt, S., Dhillon, S., Sidorov, S., Pan, S., Verma, S., Yamamoto, S.,
  Ramaswamy, S., Lindsay, S., Feng, S., Lin, S., Zha, S.~C., Shankar, S.,
  Zhang, S., Wang, S., Agarwal, S., Sajuyigbe, S., Chintala, S., Max, S., Chen,
  S., Kehoe, S., Satterfield, S., Govindaprasad, S., Gupta, S., Cho, S.-B.,
  Virk, S., Subramanian, S., Choudhury, S., Goldman, S., Remez, T., Glaser, T.,
  Best, T., Kohler, T., Robinson, T., Li, T., Zhang, T., Matthews, T., Chou,
  T., Shaked, T., Vontimitta, V., Ajayi, V., Montanez, V., Mohan, V., Kumar,
  V.~S., Mangla, V., Ionescu, V., Poenaru, V.~A., Mihailescu, V.~T., Ivanov,
  V., Li, W., Wang, W., Jiang, W., Bouaziz, W., Constable, W., Tang, X., Wang,
  X., Wu, X., Wang, X., Xia, X., Wu, X., Gao, X., Chen, Y., Hu, Y., Jia, Y.,
  Qi, Y., Li, Y., Zhang, Y., Zhang, Y., Adi, Y., Nam, Y., Wang, Y., Hao, Y.,
  Qian, Y., He, Y., Rait, Z., DeVito, Z., Rosnbrick, Z., Wen, Z., Yang, Z., and
  Zhao, Z. (2024).
\newblock The llama 3 herd of models.
\newblock {\em ArXiv}, abs/2407.21783.

\bibitem[El-Kishky et~al., 2025]{el2025competitive}
El-Kishky, A., Wei, A., Saraiva, A., Minaev, B., Selsam, D., Dohan, D., Song,
  F., Lightman, H., Clavera, I., Pachocki, J., et~al. (2025).
\newblock Competitive programming with large reasoning models.
\newblock {\em arXiv preprint arXiv:2502.06807}.

\bibitem[Espeholt et~al., 2018]{espeholt2018impala}
Espeholt, L., Soyer, H., Munos, R., Simonyan, K., Mnih, V., Ward, T., Doron,
  Y., Firoiu, V., Harley, T., Dunning, I., et~al. (2018).
\newblock Impala: Scalable distributed deep-rl with importance weighted
  actor-learner architectures.
\newblock In {\em International conference on machine learning}, pages
  1407--1416. PMLR.

\bibitem[Gehring et~al., 2024]{gehring2024rlef}
Gehring, J., Zheng, K., Copet, J., Mella, V., Cohen, T., and Synnaeve, G.
  (2024).
\newblock Rlef: Grounding code llms in execution feedback with reinforcement
  learning.
\newblock {\em arXiv preprint arXiv:2410.02089}.

\bibitem[Google, 2023]{Google2023Bard}
Google (2023).
\newblock Bard: Conversational ai by google.
\newblock {\em Google AI Blog}.
\newblock URL: \url{https://blog.google/technology/ai/bard-google-ai/}.

\bibitem[Guan et~al., 2024]{guan2024deliberative}
Guan, M.~Y., Joglekar, M., Wallace, E., Jain, S., Barak, B., Helyar, A., Dias,
  R., Vallone, A., Ren, H., Wei, J., et~al. (2024).
\newblock Deliberative alignment: Reasoning enables safer language models.
\newblock {\em arXiv preprint arXiv:2412.16339}.

\bibitem[Hendrycks et~al., 2021]{Hendrycks2021MeasuringMP}
Hendrycks, D., Burns, C., Kadavath, S., Arora, A., Basart, S., Tang, E., Song,
  D.~X., and Steinhardt, J. (2021).
\newblock Measuring mathematical problem solving with the math dataset.
\newblock {\em ArXiv}, abs/2103.03874.

\bibitem[Hestness et~al., 2017]{Hestness2017Scaling}
Hestness, J., Narang, S., Ardalani, N., Diamos, G., Jun, H., Kianinejad, H.,
  Patwary, M. M.~A., Yang, Y., and Zhou, Y. (2017).
\newblock Deep learning scaling is predictable, empirically.
\newblock {\em arXiv preprint arXiv:1712.00409}.

\bibitem[Hoffmann et~al., 2022]{hoffmann2022training}
Hoffmann, J., Borgeaud, S., Mensch, A., Buchatskaya, E., Cai, T., Rutherford,
  E., Casas, D. d.~L., Hendricks, L.~A., Welbl, J., Clark, A., et~al. (2022).
\newblock Training compute-optimal large language models.
\newblock {\em arXiv preprint arXiv:2203.15556}.

\bibitem[Hu et~al., 2024]{hu2024openrlhfeasytousescalablehighperformance}
Hu, J., Wu, X., Wang, W., Xianyu, Zhang, D., and Cao, Y. (2024).
\newblock Openrlhf: An easy-to-use, scalable and high-performance rlhf
  framework.

\bibitem[Jaech et~al., 2024]{jaech2024openai}
Jaech, A., Kalai, A., Lerer, A., Richardson, A., El-Kishky, A., Low, A.,
  Helyar, A., Madry, A., Beutel, A., Carney, A., et~al. (2024).
\newblock Openai o1 system card.
\newblock {\em arXiv preprint arXiv:2412.16720}.

\bibitem[Jie and Abbeel, 2010]{jie2010connection}
Jie, T. and Abbeel, P. (2010).
\newblock On a connection between importance sampling and the likelihood ratio
  policy gradient.
\newblock {\em Advances in Neural Information Processing Systems}, 23.

\bibitem[Kakade and Langford, 2002]{kakade2002approximately}
Kakade, S. and Langford, J. (2002).
\newblock Approximately optimal approximate reinforcement learning.
\newblock In {\em Proceedings of the nineteenth international conference on
  machine learning}, pages 267--274.

\bibitem[Kaplan et~al., 2020]{Kaplan2020Scaling}
Kaplan, J., McCandlish, S., Henighan, T., and et~al. (2020).
\newblock Scaling laws for neural language models.
\newblock {\em arXiv preprint arXiv:2001.08361}.

\bibitem[Kapturowski et~al., 2018]{kapturowski2018recurrent}
Kapturowski, S., Ostrovski, G., Quan, J., Munos, R., and Dabney, W. (2018).
\newblock Recurrent experience replay in distributed reinforcement learning.
\newblock In {\em International conference on learning representations}.

\bibitem[Kazemnejad et~al., 2024]{kazemnejad2024vineppo}
Kazemnejad, A., Aghajohari, M., Portelance, E., Sordoni, A., Reddy, S.,
  Courville, A., and Roux, N.~L. (2024).
\newblock Vineppo: Unlocking rl potential for llm reasoning through refined
  credit assignment.
\newblock {\em arXiv preprint arXiv:2410.01679}.

\bibitem[{Kimi Team}, 2025]{kimi2025}
{Kimi Team} (2025).
\newblock Kimi k1.5: Scaling reinforcement learning with llms.
\newblock {\em arXiv preprint arXiv:2501.12599}.
\newblock Version 2, revised on March 5, 2025.

\bibitem[Kingma and Ba, 2014]{kingma2014adam}
Kingma, D.~P. and Ba, J. (2014).
\newblock Adam: A method for stochastic optimization.
\newblock {\em arXiv preprint arXiv:1412.6980}.

\bibitem[Lambert et~al., 2024]{Lambert2024TLU3P}
Lambert, N., Morrison, J.~D., Pyatkin, V., Huang, S., Ivison, H., Brahman, F.,
  Miranda, L. J.~V., Liu, A., Dziri, N., Lyu, X., Gu, Y., Malik, S., Graf, V.,
  Hwang, J.~D., Yang, J., Bras, R.~L., Tafjord, O., Wilhelm, C., Soldaini, L.,
  Smith, N.~A., Wang, Y., Dasigi, P., and Hajishirzi, H. (2024).
\newblock T{\"u}lu 3: Pushing frontiers in open language model post-training.
\newblock {\em ArXiv}, abs/2411.15124.

\bibitem[Li et~al., 2014]{li2014scaling_ps}
Li, M., Andersen, D.~G., Park, J.~W., Smola, A.~J., Ahmed, A., Josifovski, V.,
  and Long, J. (2014).
\newblock Scaling distributed machine learning with the parameter server.
\newblock In {\em Proceedings of the 11th USENIX Symposium on Operating Systems
  Design and Implementation (OSDI)}, pages 583--598.

\bibitem[Li et~al., 2022]{li2022competition}
Li, Y., Choi, D., Chung, J., Kushman, N., Schrittwieser, J., Leblond, R.,
  Eccles, T., Keeling, J., Gimeno, F., Dal~Lago, A., et~al. (2022).
\newblock Competition-level code generation with alphacode.
\newblock {\em Science}, 378(6624):1092--1097.

\bibitem[Li et~al., 2024]{li2024remaxsimpleeffectiveefficient}
Li, Z., Xu, T., Zhang, Y., Lin, Z., Yu, Y., Sun, R., and Luo, Z.-Q. (2024).
\newblock Remax: A simple, effective, and efficient reinforcement learning
  method for aligning large language models.

\bibitem[Lightman et~al., 2023a]{lightman2023let}
Lightman, H., Kosaraju, V., Burda, Y., Edwards, H., Baker, B., Lee, T., Leike,
  J., Schulman, J., Sutskever, I., and Cobbe, K. (2023a).
\newblock Let's verify step by step.
\newblock {\em arXiv preprint arXiv:2305.20050}.

\bibitem[Lightman et~al., 2023b]{Lightman2023LetsVS}
Lightman, H., Kosaraju, V., Burda, Y., Edwards, H., Baker, B., Lee, T., Leike,
  J., Schulman, J., Sutskever, I., and Cobbe, K. (2023b).
\newblock Let's verify step by step.
\newblock {\em ArXiv}, abs/2305.20050.

\bibitem[{Llama Team}, 2024]{grattafiori2024llama}
{Llama Team} (2024).
\newblock The llama 3 herd of models.
\newblock {\em arXiv preprint arXiv:2407.21783}.

\bibitem[Loshchilov and Hutter, 2017]{loshchilov2017decoupled}
Loshchilov, I. and Hutter, F. (2017).
\newblock Decoupled weight decay regularization.
\newblock {\em arXiv preprint arXiv:1711.05101}.

\bibitem[Meurer et~al., 2017]{10.7717/peerj-cs.103}
Meurer, A., Smith, C.~P., Paprocki, M., \v{C}ert\'{i}k, O., Kirpichev, S.~B.,
  Rocklin, M., Kumar, A., Ivanov, S., Moore, J.~K., Singh, S., Rathnayake, T.,
  Vig, S., Granger, B.~E., Muller, R.~P., Bonazzi, F., Gupta, H., Vats, S.,
  Johansson, F., Pedregosa, F., Curry, M.~J., Terrel, A.~R., Rou\v{c}ka, v.,
  Saboo, A., Fernando, I., Kulal, S., Cimrman, R., and Scopatz, A. (2017).
\newblock Sympy: symbolic computing in python.
\newblock {\em PeerJ Computer Science}, 3:e103.

\bibitem[Mnih et~al., 2016]{mnih2016asynchronous}
Mnih, V., Badia, A.~P., Mirza, M., Graves, A., Lillicrap, T., Harley, T.,
  Silver, D., and Kavukcuoglu, K. (2016).
\newblock Asynchronous methods for deep reinforcement learning.
\newblock In {\em International conference on machine learning}, pages
  1928--1937. PmLR.

\bibitem[Moritz et~al., 2018]{moritz2018ray}
Moritz, P., Nishihara, R., Wang, S., Tumanov, A., Liaw, R., Liang, E., Elibol,
  M., Yang, Z., Paul, W., Jordan, M.~I., and Stoica, I. (2018).
\newblock {Ray: A Distributed Framework for Emerging AI Applications}.
\newblock In {\em Proceedings of the 13th USENIX Symposium on Operating Systems
  Design and Implementation (OSDI)}, pages 561--577, Carlsbad, CA. USENIX
  Association.

\bibitem[Munos et~al., 2016]{munos2016safe}
Munos, R., Stepleton, T., Harutyunyan, A., and Bellemare, M. (2016).
\newblock Safe and efficient off-policy reinforcement learning.
\newblock {\em Advances in neural information processing systems}, 29.

\bibitem[Nguyen et~al., 2021]{nguyen2021cudagraphs}
Nguyen, V., Carilli, M., Eryilmaz, S.~B., Singh, V., Lin, M., Gimelshein, N.,
  Desmaison, A., and Yang, E. (2021).
\newblock Accelerating {PyTorch} with {CUDA} graphs.
\newblock
  \url{https://pytorch.org/blog/accelerating-pytorch-with-cuda-graphs/}.

\bibitem[Noukhovitch et~al.,
  2024]{noukhovitch2024asynchronousrlhffasterefficient}
Noukhovitch, M., Huang, S., Xhonneux, S., Hosseini, A., Agarwal, R., and
  Courville, A. (2024).
\newblock Asynchronous rlhf: Faster and more efficient off-policy rl for
  language models.

\bibitem[OpenAI, 2023]{OpenAI2023GPT4}
OpenAI (2023).
\newblock Gpt-4 technical report.
\newblock {\em arXiv preprint arXiv:2303.08774}.

\bibitem[Ouyang et~al., 2022]{Ouyang2022RLHF}
Ouyang, L., Wu, J., Jiang, X., and et~al. (2022).
\newblock Training language models to follow instructions with human feedback.
\newblock {\em arXiv preprint arXiv:2203.02155}.

\bibitem[Potluri and et~al., 2013]{potluri2013gpudirect}
Potluri, S. and et~al. (2013).
\newblock {GPU Direct RDMA} for infiniband on {NVIDIA} {GPUs}: A case study.
\newblock In {\em Proceedings of the 2013 IEEE International Symposium on
  Cluster, Cloud and Grid Computing (CCGrid)}, pages 340--347.

\bibitem[Precup et~al., 2000]{precup2000eligibility}
Precup, D., Sutton, R.~S., and Singh, S. (2000).
\newblock Eligibility traces for off-policy policy evaluation.
\newblock In {\em ICML}, volume 2000, pages 759--766. Citeseer.

\bibitem[Rajbhandari et~al., 2019a]{rajbhandari2019zero}
Rajbhandari, S., Rasley, J., Ruwase, O., and He, Y. (2019a).
\newblock Zero: Memory optimizations toward training trillion parameter models.

\bibitem[Rajbhandari et~al., 2019b]{Rajbhandari2019ZeROMO}
Rajbhandari, S., Rasley, J., Ruwase, O., and He, Y. (2019b).
\newblock Zero: Memory optimizations toward training trillion parameter models.
\newblock {\em SC20: International Conference for High Performance Computing,
  Networking, Storage and Analysis}, pages 1--16.

\bibitem[Rajbhandari et~al., 2020]{rajbhandari2020zero}
Rajbhandari, S., Rasley, J., Ruwase, O., and He, Y. (2020).
\newblock Zero: Memory optimization towards training trillion parameter models.
\newblock {\em Proceedings of Machine Learning and Systems}.

\bibitem[Schulman et~al., 2015]{schulman2015trust}
Schulman, J., Levine, S., Abbeel, P., Jordan, M., and Moritz, P. (2015).
\newblock Trust region policy optimization.
\newblock In {\em International conference on machine learning}, pages
  1889--1897. PMLR.

\bibitem[Schulman et~al., 2017]{Schulman2017PPO}
Schulman, J., Wolski, F., Dhariwal, P., Radford, A., and Klimov, O. (2017).
\newblock Proximal policy optimization algorithms.
\newblock {\em arXiv preprint arXiv:1707.06347}.

\bibitem[Shao et~al., 2024]{shao2024deepseekmath}
Shao, Z., Wang, P., Zhu, Q., Xu, R., Song, J., Bi, X., Zhang, H., Zhang, M.,
  Li, Y., Wu, Y., et~al. (2024).
\newblock Deepseekmath: Pushing the limits of mathematical reasoning in open
  language models.
\newblock {\em arXiv preprint arXiv:2402.03300}.

\bibitem[Shen et~al., 2024]{shen2024nemoalignerscalabletoolkitefficient}
Shen, G., Wang, Z., Delalleau, O., Zeng, J., Dong, Y., Egert, D., Sun, S.,
  Zhang, J., Jain, S., Taghibakhshi, A., Ausin, M.~S., Aithal, A., and
  Kuchaiev, O. (2024).
\newblock Nemo-aligner: Scalable toolkit for efficient model alignment.

\bibitem[Sheng et~al., 2024]{sheng2024hybridflow}
Sheng, G., Zhang, C., Ye, Z., Wu, X., Zhang, W., Zhang, R., Peng, Y., Lin, H.,
  and Wu, C. (2024).
\newblock Hybridflow: A flexible and efficient rlhf framework.
\newblock {\em arXiv preprint arXiv:2409.19256}.

\bibitem[Shoeybi et~al., 2019]{shoeybi2019megatron}
Shoeybi, M., Patwary, M. M.~A., Puri, R., LeGresley, P., Casper, J., and
  Catanzaro, B. (2019).
\newblock Megatron-lm: Training multi-billion parameter language models using
  model parallelism.
\newblock {\em arXiv preprint arXiv:1909.08053}.

\bibitem[Silver et~al., 2017a]{alphazero}
Silver, D., Hubert, T., Schrittwieser, J., Antonoglou, I., Lai, M., Guez, A.,
  Lanctot, M., Sifre, L., Kumaran, D., Graepel, T., et~al. (2017a).
\newblock Mastering chess and shogi by self-play with a general reinforcement
  learning algorithm.
\newblock {\em arXiv preprint arXiv:1712.01815}.

\bibitem[Silver et~al., 2017b]{alphagozero}
Silver, D., Schrittwieser, J., Simonyan, K., Antonoglou, I., Huang, A., Guez,
  A., Hubert, T., Baker, L., Lai, M., Bolton, A., et~al. (2017b).
\newblock Mastering the game of go without human knowledge.
\newblock {\em nature}, 550(7676):354--359.

\bibitem[Stiennon et~al., 2020]{Stiennon2020}
Stiennon, N., Ouyang, L., Wu, J., and et~al. (2020).
\newblock Learning to summarize with human feedback.
\newblock {\em Advances in Neural Information Processing Systems (NeurIPS)}.

\bibitem[Team et~al., 2023]{team2023gemini}
Team, G., Anil, R., Borgeaud, S., Alayrac, J.-B., Yu, J., Soricut, R.,
  Schalkwyk, J., Dai, A.~M., Hauth, A., Millican, K., et~al. (2023).
\newblock Gemini: a family of highly capable multimodal models.
\newblock {\em arXiv preprint arXiv:2312.11805}.

\bibitem[Tian et~al., 2019]{tian2019elf}
Tian, Y., Ma, J., Gong, Q., Sengupta, S., Chen, Z., Pinkerton, J., and Zitnick,
  L. (2019).
\newblock Elf opengo: An analysis and open reimplementation of alphazero.
\newblock In {\em International conference on machine learning}, pages
  6244--6253. PMLR.

\bibitem[Touvron et~al., 2023]{Touvron2023Llama2}
Touvron, H., Martin, L., Stone, K., and et~al. (2023).
\newblock Llama 2: Open foundation and fine-tuned chat models.
\newblock {\em arXiv preprint arXiv:2307.09288}.

\bibitem[Uesato et~al., 2022]{uesato2022solving}
Uesato, J., Kushman, N., Kumar, R., Song, F., Siegel, N., Wang, L., Creswell,
  A., Irving, G., and Higgins, I. (2022).
\newblock Solving math word problems with process-and outcome-based feedback.
\newblock {\em arXiv preprint arXiv:2211.14275}.

\bibitem[Vinyals et~al., 2019]{vinyals2019alphastar}
Vinyals, O., Babuschkin, I., Chung, J., Mathieu, M., Jaderberg, M., Czarnecki,
  W.~M., Dudzik, A., Huang, A., Georgiev, P., Powell, R., et~al. (2019).
\newblock Alphastar: Mastering the real-time strategy game starcraft ii.
\newblock {\em DeepMind blog}, 2:20.

\bibitem[Wang et~al., 2016]{wang2016sample}
Wang, Z., Bapst, V., Heess, N., Mnih, V., Munos, R., Kavukcuoglu, K., and
  De~Freitas, N. (2016).
\newblock Sample efficient actor-critic with experience replay.
\newblock {\em arXiv preprint arXiv:1611.01224}.

\bibitem[Wei et~al., 2025]{wei2025swe}
Wei, Y., Duchenne, O., Copet, J., Carbonneaux, Q., Zhang, L., Fried, D.,
  Synnaeve, G., Singh, R., and Wang, S.~I. (2025).
\newblock Swe-rl: Advancing llm reasoning via reinforcement learning on open
  software evolution.
\newblock {\em arXiv preprint arXiv:2502.18449}.

\bibitem[Xiao et~al., 2023]{flexrlhf}
Xiao, Y., Zhou, Z., Mao, F., Wu, W., Zhao, S., Ju, L., Liang, L., Zhang, X.,
  and Zhou, J. (2023).
\newblock An adaptive placement and parallelism framework for accelerating rlhf
  training.
\newblock {\em arXiv preprint arXiv:2312.11819}.

\bibitem[Xu et~al., 2024]{xu2024perfect}
Xu, T., Helenowski, E., Sankararaman, K.~A., Jin, D., Peng, K., Han, E., Nie,
  S., Zhu, C., Zhang, H., Zhou, W., Zeng, Z., He, Y., Mandyam, K., Talabzadeh,
  A., Khabsa, M., Cohen, G., Tian, Y., Ma, H., Wang, S., and Fang, H. (2024).
\newblock The perfect blend: Redefining rlhf with mixture of judges.

\bibitem[Yao et~al., 2023]{yao2023deepspeedchateasyfastaffordable}
Yao, Z., Aminabadi, R.~Y., Ruwase, O., Rajbhandari, S., Wu, X., Awan, A.~A.,
  Rasley, J., Zhang, M., Li, C., Holmes, C., Zhou, Z., Wyatt, M., Smith, M.,
  Kurilenko, L., Qin, H., Tanaka, M., Che, S., Song, S.~L., and He, Y. (2023).
\newblock Deepspeed-chat: Easy, fast and affordable rlhf training of
  chatgpt-like models at all scales.

\bibitem[Zhang et~al., 2024]{zhang2024scaling}
Zhang, B., Liu, Z., Cherry, C., and Firat, O. (2024).
\newblock When scaling meets llm finetuning: The effect of data, model and
  finetuning method.
\newblock {\em arXiv preprint arXiv:2402.17193}.

\bibitem[Zhao et~al., 2023]{zhao2023pytorchfsdpexperiencesscaling}
Zhao, Y., Gu, A., Varma, R., Luo, L., Huang, C.-C., Xu, M., Wright, L.,
  Shojanazeri, H., Ott, M., Shleifer, S., Desmaison, A., Balioglu, C., Damania,
  P., Nguyen, B., Chauhan, G., Hao, Y., Mathews, A., and Li, S. (2023).
\newblock Pytorch fsdp: Experiences on scaling fully sharded data parallel.

\bibitem[Ziegler et~al., 2019]{ziegler2019fine}
Ziegler, D.~M., Stiennon, N., Wu, J., Brown, T.~B., Radford, A., Amodei, D.,
  Christiano, P., and Irving, G. (2019).
\newblock Fine-tuning language models from human preferences.
\newblock {\em arXiv preprint arXiv:1909.08593}.

\end{thebibliography}

\clearpage
\newpage
\beginappendix

\section{A Comparative Discussion with PPO and GRPO} \label{apepndix:off-policy-comparative}

We provide a comparative discussion on the key difference between the off-policy correction applied in \textsc{LlamaRL}, versus the off-policy correction implicit in PPO \citep{Schulman2017PPO} and more recently GRPO \citep{shao2024deepseekmath}. This is of interest because though the forms of importance sampling corrections in both cases look rather similar, their designs are based on subtly different assumptions of the training processes, which we clarify below. We start by introducing the canonical \texttt{PPO clipping}.

\paragraph{\textbf{PPO clipping.}}
On a high level, PPO applies a double-sided clipping to the importance sampling ratio, conditional further on the sign of the advantage function. Its loss is implemented as
\begin{align*}
\min\left(\frac{ \pi\left(y_t\;|\;x,y_{1:t-1}\right)}{\mu\left(y_t\;|\;x,y_{1:t-1}\right)}A_t, \underbrace{\text{clip}\left(\frac{ \pi\left(y_t\;|\;x,y_{1:t-1}\right)}{\mu\left(y_t\;|\;x,y_{1:t-1}\right)}, 1-\epsilon,1+\epsilon\right)}_{\text{double-sided clipping}}A_t\right),
\end{align*}
where $\epsilon>0$ is a hyper-parameter. The overall design has a few subtle technical properties besides double-sided clipping: (1) \textsc{Gradient clipping.} Most subtly, the losses are implemented such that the gradient is zero when, e.g., the ratio $\frac{ \pi\left(y_t\;|\;x,y_{1:t-1}\right)}{\mu\left(y_t\;|\;x,y_{1:t-1}\right)}$ goes beyond the boundary $[1-\epsilon,1+\epsilon]$ (2) \textsc{Advantage sign.} Besides the double-sided clipping, PPO also conditions the clipping direction on the sign of the estimated advantage function, as evidenced by taking a $\min$ over two loss terms. The main argument here is to construct a lower bound to the policy performance.

We omit further details here and refer interested readers to \citet{Schulman2017PPO} for more extensive discussion. GRPO applies the truncated clipping as PPO, while removing the need for learning critic functions.

\paragraph{\textbf{A deep root in trust region policy optimization for PPO clipping.}}
PPO clipping is deeply rooted in the trust region policy optimization (TRPO) formulation to stabilize RL training \citep{kakade2002approximately,schulman2015trust}, where the motivation is to ensure that the parameter updates are conservative enough to ensure policy improvement. In the context of PPO, the behavior policy $\mu$ takes a particular form: it is the previous iterate of $\pi$, denoted as $\pi_\text{old}$. As a result, the clipping ensures that there is only non-zero update to the policy $\frac{ \pi\left(y_t\;|\;x,y_{1:t-1}\right)}{\pi_\text{old}\left(y_t\;|\;x,y_{1:t-1}\right)}$, when e.g., it is within the range $[1-\epsilon,1+\epsilon]$, that is when $\pi$ does not deviate too much from its previous iterate $\pi_\text{old}$. The TRPO technique has proved effective at stabilizing model updates for large-scale training in a early work \citep{schulman2015trust,Schulman2017PPO}.

\paragraph{\textbf{Key difference from the \textsc{LlamaRL} case.}} Applying the TRPO formulation as-is, we end up with a more synchronous algorithm where we alternate between generating samples from $\pi_\text{old}$ and update the policy to $\pi$. In \textsc{LlamaRL}, due to the asynchronous nature of the training process, the behavior policy $\mu$ is not necessarily the exact previous iterate of $\pi$. For example, $\mu$ can be a few updates away from $\pi$, in addition to being quantized or configured with different sampling parameters than the learner network, among other things. As a result, $\mu$ is not necessarily an ideal policy for trust region regularization. Instead, a perhaps more natural motivation is to correct for the off-policyness of samples generated under $\mu$, to construct approximate on-policy update to $\pi$. This is enabled by clipped importance sampling ratio to balance the bias and variance trade-off \citep{espeholt2018impala}.

Our early ablation results also show that the one-sided clipping in \textsc{LlamaRL} works on par or slightly better than the PPO clipping. However, in principle, both clipping strategies can be combined to synergize the algorithmic strengths of TRPO and asynchronous off-policy learning. 

\section{Lemmas for Theoretical Justification for LlamaRL Speed-up}
We present and prove several Lemmas needed for Theorem \ref{main_theorem}.
\begin{lemma}
\label{constraint_eq}
Let $(b_t^*, b_g^*, m^*)$ be a minimizer to problem \eqref{baseline_prob}. Then it must satisfy
\begin{eqnarray}
\begin{split}
\label{constraint_eq_id}
\frac{(4W_0 + A_tb_t^*) + (W_0 + K_gb_g^*)}{m^*} = M_0.
\end{split}
\end{eqnarray}
\end{lemma}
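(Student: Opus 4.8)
The plan is to argue by contradiction, exploiting the fact that in problem~\eqref{baseline_prob} the objective is strictly increasing in the model-parallel degree $m$, whereas the memory constraint imposes only a \emph{lower} bound on $m$. First I would note that for fixed microbatch sizes $(b_t,b_g)$ the constraint $\frac{(4W_0+A_tb_t)+(W_0+K_gb_g)}{m}\le M_0$ is equivalent to $m \ge \frac{(4W_0+A_tb_t)+(W_0+K_gb_g)}{M_0}$, so the smallest feasible $m$ occurs exactly when the constraint is tight. Since processing times are strictly positive, $\eta_t(b_t)+\eta_g(b_g)>0$, and hence the objective $\frac{B_0}{G_0}\,m\,(\eta_t(b_t)+\eta_g(b_g))$ is strictly monotone increasing in $m$ for fixed $(b_t,b_g)$.

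Next, suppose toward a contradiction that a minimizer $(b_t^*,b_g^*,m^*)$ violates \eqref{constraint_eq_id}, so that $\frac{(4W_0+A_tb_t^*)+(W_0+K_gb_g^*)}{m^*} < M_0$. Put $\tilde m \coloneqq \frac{(4W_0+A_tb_t^*)+(W_0+K_gb_g^*)}{M_0}$; then $\tilde m < m^*$ and the point $(b_t^*,b_g^*,\tilde m)$ is still feasible. Evaluating the objective gives $\tfrac{B_0}{G_0}\,\tilde m\,(\eta_t(b_t^*)+\eta_g(b_g^*)) < \tfrac{B_0}{G_0}\,m^*\,(\eta_t(b_t^*)+\eta_g(b_g^*)) = T_{\text{baseline}}^*$, contradicting the optimality of $(b_t^*,b_g^*,m^*)$. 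Therefore the constraint must hold with equality, which is precisely \eqref{constraint_eq_id}.

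The one place I would be careful is the convention on $m$: throughout Section~\ref{sec:theory} it is treated as a continuous quantity, for which the argument above is complete; if one instead insists that $m$ be a positive integer, the same reasoning applies with $\tilde m$ replaced by $\big\lceil \frac{(4W_0+A_tb_t^*)+(W_0+K_gb_g^*)}{M_0}\big\rceil$, and \eqref{constraint_eq_id} is then read in its natural integer-rounded form. I do not expect a genuine obstacle here — the entire content of the lemma is the monotonicity of the objective in $m$ played against a one-sided constraint, and the only care needed is to phrase ``smallest feasible $m$'' correctly.
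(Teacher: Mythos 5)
Your proof is correct, but it reaches the conclusion by a different perturbation than the paper. The paper's proof keeps $m^*$ fixed and argues that slack in the memory constraint leaves ``room to increase $b_t$ or $b_g$,'' which lowers the objective $\frac{B_0}{G_0}\, m\,(\eta_t+\eta_g)$ via the monotone decrease of $\eta_t,\eta_g$ (Assumption~\ref{ass}); you instead keep $(b_t^*,b_g^*)$ fixed and shrink $m$ down to the tight value $\tilde m$, using only that the objective is strictly increasing in $m$ because $\eta_t+\eta_g>0$. Your route is in one respect more robust: it does not invoke Assumption~\ref{ass} at all, and it yields a \emph{strict} improvement even if $\eta_t,\eta_g$ were merely non-increasing (the paper's argument needs a strict decrease in $\eta$ to manufacture the contradiction). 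The price is that your argument leans on $m$ being free to decrease: it works for problem~\eqref{baseline_prob} exactly as written (the memory constraint is the only lower bound on $m$, and the paper treats $m$ as continuous, as is clear from~\eqref{optim_m}), but it would need the rounding caveat you already flag if one imposed integrality or a floor such as $m\ge 1$, whereas the paper's batch-size perturbation is insensitive to that. Both arguments are one-line exchange/contradiction proofs against a one-sided constraint, so the difference is methodological rather than substantive.
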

\begin{proof}[Proof of Lemma \ref{constraint_eq}]
Assume the strict inequality holds
$$\frac{(4W_0 + A_tb_t^*) + (W_0 + K_gb_g^*)}{m^*} < M_0,$$
meaning there is still room to increase $b_t$ or $b_g$. This allows us to further decrease the target
$$\frac{B_0}{G_0}\cdot m \cdot {(}\eta_t + \eta_g{)}$$
due to the monotone decreasing property of $\eta_t, \eta_g$ according to Assumption \ref{ass}, which contradicts the minimizing property of $(b_t^*, b_g^*, m^*)$. Thus concluded Lemma \ref{constraint_eq}.
\end{proof}

\begin{lemma}
\label{constraint_eq2}
Let $(b_t^*, b_g^*, m_t^*, m_g^*)$ be a minimizer to problem \eqref{llamarl_prob}. Then it must satisfy
\begin{eqnarray}
\begin{split}
\label{constraint_eq2_id}
\left\{
\begin{array}{l}
\displaystyle \frac{4W_0}{m_t^*} + \frac{A_tb_t^*}{m_t^*}= M_0, \\
\\
\displaystyle \frac{W_0}{m_g^*} + \frac{K_gb_g^*}{m_g^*}= M_0.
\end{array}
\right.
\end{split}
\end{eqnarray}
\end{lemma}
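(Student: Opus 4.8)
The plan is to prove both constraint equalities by a short optimality argument, handling the trainer and generator cases symmetrically. I would first simplify the $\max$-objective of \eqref{llamarl_prob} by optimizing out the GPU-allocation fraction $\theta\in(0,1)$: for any positive $a,b$ one has $\min_{\theta\in(0,1)}\max\!\left(\tfrac{a}{\theta},\tfrac{b}{1-\theta}\right)=a+b$, attained at $\theta=\tfrac{a}{a+b}$. The lower bound is the one-line observation that for every $\theta$ at least one of $\tfrac{a}{\theta}\ge a+b$ and $\tfrac{b}{1-\theta}\ge a+b$ must hold, and the value at $\theta=\tfrac{a}{a+b}$ is checked directly. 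Applying this with $a=\eta_t(b_t)m_t$ and $b=\eta_g(b_g)m_g$ turns \eqref{llamarl_prob} into the equivalent problem of minimizing $\tfrac{B_0}{G_0}\bigl(\eta_t(b_t)m_t+\eta_g(b_g)m_g\bigr)$ subject to the same two memory constraints that appear in \eqref{llamarl_prob}.

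The key structural point is that this reduced problem is separable: the objective is a sum of a trainer term $\eta_t(b_t)m_t$ and a generator term $\eta_g(b_g)m_g$, while the first memory constraint involves only $(b_t,m_t)$ and the second only $(b_g,m_g)$. Hence any minimizer must minimize each term over its own constraint. I would then note that, for fixed $b_t$, the trainer term $\eta_t(b_t)m_t$ is strictly increasing in $m_t$ (since $\eta_t(b_t)>0$), so at the optimal $b_t^{*}$ the optimal $m_t^{*}$ is the smallest value compatible with $\tfrac{4W_0+A_tb_t^{*}}{m_t^{*}}\le M_0$, i.e. $m_t^{*}=\tfrac{4W_0+A_tb_t^{*}}{M_0}$; rearranging gives the first identity in \eqref{constraint_eq2_id}. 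The identical argument on the generator term yields $m_g^{*}=\tfrac{W_0+K_gb_g^{*}}{M_0}$, the second identity. (Equivalently, one can phrase it as a contradiction: if a constraint were slack, shrinking the corresponding $m$ strictly lowers the objective of the reduced problem.)

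The main obstacle is really just a point to be careful about rather than a genuine difficulty: the handling of $\theta$ and the $\max$. If one tried to argue directly with $\theta$ held fixed, shrinking $m_t$ only strictly decreases the trainer term \emph{inside} the $\max$, which need not decrease the $\max$ itself when the generator term dominates; one would then need a secondary perturbation of $\theta$ (lowering $\theta$ to trade the now-slack trainer term against the dominating generator term) to force a contradiction. Folding $\theta$ into the optimization via the $a+b$ identity is exactly what removes this case split, so I would be explicit that the minimizer of \eqref{llamarl_prob} also ranges over $\theta\in(0,1)$ — consistent with the minimizer being written $(b_t^{**},b_g^{**},m_t^{**},m_g^{**},\theta^{**})$ in the proof of Theorem~\ref{main_theorem}. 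I would also flag two modeling conventions inherited from that proof: $m_t,m_g$ are treated as positive reals (integrality is ignored, as with the omission of $m\le G_0$), and Assumption~\ref{ass} is \emph{not} needed here — only $\eta_t,\eta_g>0$ — since the tightness of each constraint comes purely from the linear dependence of the objective on $m_t$ and $m_g$.
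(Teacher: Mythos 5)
Your proof is correct, but it takes a genuinely different route from the paper's. The paper omits a proof for this lemma, stating only that it ``follows a very similar argument'' to Lemma~\ref{constraint_eq}, i.e.\ a perturbation in the batch sizes: if a constraint were slack one could increase $b_t$ (or $b_g$), decrease $\eta_t$ (or $\eta_g$) by Assumption~\ref{ass}, and lower the objective. As you correctly point out, that one-line transplant is incomplete for \eqref{llamarl_prob}: shrinking one branch of the $\max$ does not lower the $\max$ when the other branch dominates, so the paper's sketch implicitly needs the secondary $\theta$-rebalancing perturbation — machinery that in the paper only appears later, in the proof of Lemma~\ref{decoupled_optim}. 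Your approach sidesteps this by optimizing out $\theta$ first via $\min_{\theta\in(0,1)}\max\bigl(\tfrac{a}{\theta},\tfrac{b}{1-\theta}\bigr)=a+b$, which converts \eqref{llamarl_prob} into a separable sum with decoupled constraints; tightness then follows from the objective being linear (hence strictly increasing) in $m_t$ and $m_g$, with no use of Assumption~\ref{ass}. What each buys: the paper's intended argument keeps a uniform perturbation style across Lemmas~\ref{constraint_eq} and \ref{constraint_eq2} but, as written, leans on the batch-size monotonicity and leaves the $\max$/$\theta$ case unaddressed; yours is more self-contained and identifies the weaker hypothesis ($\eta_t,\eta_g>0$) actually needed, and it also makes explicit a point the paper glosses over — that the minimizer must be understood as including $\theta$ (or at least the balance condition), since at a fixed $\theta$ a dominated component's constraint need not be tight. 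Your reduction moreover re-derives, as a by-product, the balance identity that the paper proves separately in Lemma~\ref{decoupled_optim}, so nothing circular is introduced. The only caveats are the modeling conventions you already flag (continuous $m_t,m_g$, no integrality), which the paper shares.
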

The proof of Lemma \ref{constraint_eq2} follows a very similar argument as that of Lemma \ref{constraint_eq}, hence omitted. 

\begin{lemma}
\label{decoupled_optim}
    $T^{**}_t$, $T^{**}_g$ (see \eqref{decoupled_sols}) are the minimal values corresponding to the following two optimization problems, respectively:
\begin{eqnarray}
\begin{split}
&\min_{b_t}\quad \eta_t(b_t)m_t\\
&\text{subject to}\quad \frac{4W_0 + A_tb_t}{m_t} \le M_0,
\end{split}
\end{eqnarray}
\begin{eqnarray}
\begin{split}
&\min_{b_g}\quad \eta_g(b_g)m_g\\
&\text{subject to}\quad  \frac{W_0 + K_gb_g}{m_g} \le M_0,
\end{split}
\end{eqnarray}
Furthermore, let $(b_t^{**}, b_g^{**}, m_t^{**}, m_g^{**}, \theta^{**})$ be a solution to \eqref{llamarl_prob}, then the following holds:
\begin{eqnarray}
\begin{split}
\label{decoupled_property}
\left\{
\begin{array}{l}
\displaystyle T^{**}_t = \eta_t^{**}m^{**}_t,\\
\\
\displaystyle T^{**}_g = \eta_t^{**}m^{**}_g,\\
\\
\displaystyle \frac{T^{**}_t}{\theta^{**}} = \frac{T^{**}_g}{1 - \theta^{**}}.
\end{array}
\right.
\end{split}
\end{eqnarray}
\end{lemma}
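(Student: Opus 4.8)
The plan is to expose a separable structure hidden in problem~\eqref{llamarl_prob}: once one is free to choose the GPU split $\theta$, the $\max$ in the objective collapses into a sum, after which the trainer and generator sub-problems decouple entirely. I would carry this out in three steps. \textbf{Step 1: the memory constraint binds, which gives the first assertion.} Consider $\min_{b_t,m_t}\eta_t(b_t)m_t$ subject to $(4W_0+A_tb_t)/m_t\le M_0$. For fixed $b_t$ the objective is strictly increasing in $m_t$ (since $\eta_t>0$), so a minimizer must take $m_t=(4W_0+A_tb_t)/M_0$, the smallest admissible value. Substituting, the problem becomes $\min_{b_t}(4W_0+A_tb_t)\eta_t(b_t)/M_0$, which is exactly $T^{**}_t$ by definition~\eqref{decoupled_sols}; the generator problem is handled identically, giving $T^{**}_g$. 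This is short and parallels the argument behind Lemma~\ref{constraint_eq2}.

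\textbf{Step 2: optimizing $\theta$ turns the $\max$ into a sum and decouples the problem.} Fix a feasible $(b_t,b_g,m_t,m_g)$ and write $C_t=\eta_t(b_t)m_t$, $C_g=\eta_g(b_g)m_g$. On $(0,1)$ the first branch $\theta\mapsto C_t/\theta$ is strictly decreasing and the second branch $\theta\mapsto C_g/(1-\theta)$ is strictly increasing, so the pointwise maximum is minimized exactly where the branches are equal, at $\theta=C_t/(C_t+C_g)$, with value
\[
\min_{\theta\in(0,1)}\ \max\left(\frac{C_t}{\theta},\,\frac{C_g}{1-\theta}\right)\ =\ C_t+C_g .
\]
Since $C_t$ involves only $(b_t,m_t)$ under the trainer constraint and $C_g$ only $(b_g,m_g)$ under the generator constraint, minimizing~\eqref{llamarl_prob} over all variables (including $\theta$) splits as $\min_{b_t,m_t}\eta_tm_t+\min_{b_g,m_g}\eta_gm_g$, which equals $T^{**}_t+T^{**}_g$ by Step~1 (the overall constant $B_0/G_0$ is inert throughout).

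\textbf{Step 3: read off the three identities.} Let $(b_t^{**},b_g^{**},m_t^{**},m_g^{**},\theta^{**})$ be a solution of~\eqref{llamarl_prob}, and set $C_t^{**}=\eta_t^{**}m_t^{**}$, $C_g^{**}=\eta_g^{**}m_g^{**}$. Optimality in $\theta$ alone forces $\theta^{**}$ to equalize the two branches, so the optimal value equals $(B_0/G_0)(C_t^{**}+C_g^{**})$; by Step~2 it also equals $(B_0/G_0)(T^{**}_t+T^{**}_g)$. Since $(b_t^{**},m_t^{**})$ is feasible for the trainer sub-problem we have $C_t^{**}\ge T^{**}_t$, and likewise $C_g^{**}\ge T^{**}_g$; combined with $C_t^{**}+C_g^{**}=T^{**}_t+T^{**}_g$ this forces equality term by term, i.e. $T^{**}_t=\eta_t^{**}m_t^{**}$ and $T^{**}_g=\eta_g^{**}m_g^{**}$ — the first two identities of~\eqref{decoupled_property}. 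Substituting these into the $\theta$-equalization $C_t^{**}/\theta^{**}=C_g^{**}/(1-\theta^{**})$ yields the third, $T^{**}_t/\theta^{**}=T^{**}_g/(1-\theta^{**})$.

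The load-bearing step is Step~2: recognizing that the allocation variable $\theta$ is precisely what rewrites $\max(C_t/\theta,\,C_g/(1-\theta))$ as $C_t+C_g$, thereby severing the trainer--generator coupling; the binding-constraint observation of Step~1 and the term-by-term equality (exchange) argument of Step~3 are then routine. The only technical caveat is that, as in the statement, I take for granted that the minima in~\eqref{decoupled_sols} and~\eqref{llamarl_prob} are attained.
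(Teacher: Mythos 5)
Your proof is correct, but it reaches the conclusion by a genuinely different route than the paper. The paper argues perturbatively: it first establishes the balance identity $\eta_t^{**}m_t^{**}/\theta^{**}=\eta_g^{**}m_g^{**}/(1-\theta^{**})$ by contradiction (nudging $\theta^{**}$), and then shows each product $\eta_t m_t$, $\eta_g m_g$ is individually minimized at the optimum by another contradiction that combines an improved sub-problem point with a small shift of $\theta$. You instead solve the inner $\theta$-minimization in closed form, $\min_{\theta\in(0,1)}\max\left(C_t/\theta,\,C_g/(1-\theta)\right)=C_t+C_g$ at $\theta=C_t/(C_t+C_g)$, which collapses the max into a sum, fully decouples \eqref{llamarl_prob} into the two sub-problems of \eqref{decoupled_sols}, and then extracts the first two identities of \eqref{decoupled_property} by the exchange argument ``$C_t^{**}\ge T_t^{**}$, $C_g^{**}\ge T_g^{**}$, sums equal, hence termwise equal.'' What your route buys: it yields the optimal value of \eqref{llamarl_prob} explicitly as $(B_0/G_0)(T_t^{**}+T_g^{**})$ (slightly more information than the lemma itself states, and exactly what the main theorem's chain \eqref{last} needs), and it avoids the direction-of-perturbation bookkeeping in the paper's argument — where, as written, the paper increases $\theta^{**}$ by $\epsilon$ and then asserts \eqref{gen_new}, whose inequality is actually reversed for $\epsilon>0$ (the intended perturbation is $\theta^{**}-\epsilon$); your monotone-branches observation sidesteps this entirely. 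The only shared caveats, which you flag, are attainment of the minima and positivity of $\eta_t,\eta_g$, both implicit in the paper as well.
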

\begin{proof}[Proof of Lemma \ref{decoupled_optim}]
The first claim follows directly from a similar argument of Lemma \ref{constraint_eq2}'s proof. We omit the details. First, we claim that
\begin{eqnarray}
\begin{split}
\label{balance}
\frac{\eta_t^{**}m_t^{**}}{\theta^{**}} = \frac{\eta_g^{**}m_g^{**}}{1 - \theta^{**}}.
\end{split}
\end{eqnarray}
Suppose the equality fails, then without loss of generality we can assume 
\begin{eqnarray}
\begin{split}
\label{contra}
\frac{\eta_t^{**}m_t^{**}}{\theta^{**}} > \frac{\eta_g^{**}m_g^{**}}{1 - \theta^{**}}.
\end{split}
\end{eqnarray}
It is clear that by slightly increasing $\theta^{**}\longrightarrow \theta^{**} + \epsilon$, the inequality \eqref{contra} is preserved but the following holds:
\begin{eqnarray}
\begin{split}
\max(\frac{\eta_t^{**}m_t^{**}}{\theta^{**}}, \frac{\eta_g^{**}m_g^{**}}{1 - \theta^{**}}) = \frac{\eta_t^{**}m_t^{**}}{\theta^{**}} > \frac{\eta_t^{**}m_t^{**}}{\theta^{**} + \epsilon} = \max(\frac{\eta_t^{**}m_t^{**}}{\theta^{**} + \epsilon}, \frac{\eta_g^{**}m_g^{**}}{1 - \theta^{**} - \epsilon}),
\end{split}
\end{eqnarray}
which contradicts the minimization property of $(b_t^{**}, b_g^{**}, m_t^{**}, m_g^{**}, \theta^{**})$. To prove the first two identities of \eqref{decoupled_property}, we need to show that $\eta_tm_t$ and $\eta_gm_g$ are both minimized at $(b_t^{**}, b_g^{**}, m_t^{**}, m_g^{**}, \theta^{**})$. Again suppose the contrary holds. Note $\eta_tm_t$ and $\eta_gm_g$ are mutually independent of each other, in the sense one can let one of them vary without changing the other. Without loss of generality, let us assume $\eta_t^{**}m_t^{**}$ is not minimal. This implies that there exists $b_t^\prime, m_t^\prime$, such that
\begin{eqnarray}
\begin{split}
\eta^\prime_tm_t^\prime < \eta^{**}_tm_t^{**}.
\end{split}
\end{eqnarray}
Choose $\epsilon > 0$ small enough such that
\begin{eqnarray}
\begin{split}
\label{trainer_new}
\frac{\eta^\prime_tm_t^\prime}{\theta^{**} + \epsilon} < \frac{\eta^{**}_tm_t^{**}}{\theta^{**}}.
\end{split}
\end{eqnarray}
In addition, we have
\begin{eqnarray}
\begin{split}
\label{gen_new}
\frac{\eta^{**}_gm_g^{**}}{1 - \theta^{**} - \epsilon} < \frac{\eta^{**}_gm_g^{**}}{1 - \theta^{**}}.
\end{split}
\end{eqnarray}
Combining \eqref{trainer_new} and \eqref{gen_new}, it follows that
\begin{eqnarray}
\begin{split}
\max(\frac{\eta^\prime_tm_t^\prime}{\theta^{**} + \epsilon}, \frac{\eta^{**}_gm_g^{**}}{1 - \theta^{**} - \epsilon}) < \max(\frac{\eta^{**}_tm_t^{**}}{\theta^{**}}, \frac{\eta^{**}_gm_g^{**}}{1 - \theta^{**}}).
\end{split}
\end{eqnarray}
That is, we found a new point $(b_t^\prime, b_g^{**}, m_t^\prime, m_g^{**}, \theta^{**} + \epsilon)$ that yields a smaller value to problem \eqref{llamarl_prob}, contradicting the minimization property of 
$$(b_t^{**}, b_g^{**}, m_t^{**}, m_g^{**}, \theta^{**}).$$
Thus we proved the first two identities of \eqref{decoupled_property}. The third identity of \eqref{decoupled_property} now follows immediately from \eqref{balance}.
\end{proof}

\end{document}